\providecommand{\tabularnewline}{\\}
\theoremstyle{plain}
\theoremstyle{definition}
\newtheorem{defn}{\protect\definitionname}
\theoremstyle{plain}
\newtheorem{lem}{\protect\lemmaname}
\theoremstyle{plain}
\newtheorem{thm}{\protect\theoremname}
\theoremstyle{remark}
\newtheorem{rem}{\protect\remarkname}
\theoremstyle{plain}
\newtheorem{prop}{\protect\propositionname}
\theoremstyle{plain}
\newtheorem{cor}{\protect\corollaryname}
\providecommand*{\code}[1]{\texttt{#1}}
\let\myTOC\tableofcontents
\renewcommand\tableofcontents{%
  \pdfbookmark[1]{\contentsname}{}
  \myTOC
}
\def\LyX{\texorpdfstring{%
  L\kern-.1667em\lower.25em\hbox{Y}\kern-.125emX\@}
  {LyX}}
\renewcommand*{\backref}[1]{}
\renewcommand*{\backrefalt}[4]{%
   \ifcase #1 
    \or 
      (Cited on page~#2)%
   \else
      (Cited on pages~#2)
    \fi} 
\definecolor{blue}{HTML}{1F77B4}
\definecolor{orange}{HTML}{FF7F0E}
\definecolor{green}{HTML}{2CA02C}
\definecolor{red}{HTML}{D62728}
\definecolor{purple}{HTML}{9467BD}
\definecolor{brown}{HTML}{8C564B}
\definecolor{pink}{HTML}{E377C2}
\definecolor{grey}{HTML}{7F7F7F}
\definecolor{yellow}{HTML}{BCBD22}
\definecolor{cyan}{HTML}{17BECF}
\definecolor{turquoise}{HTML}{3FE0D0}
\definecolor{algoColorKeyword}{named}{blue}
\definecolor{algoColorComment}{named}{olive}
\setlist{leftmargin=*, topsep=0.5em, parsep=0pt, itemsep=1em, labelindent=0pt, align=left}
\providecommand{\assumptionname}{Assumption}
\providecommand{\corollaryname}{Corollary}
\providecommand{\definitionname}{Definition}
\providecommand{\lemmaname}{Lemma}
\providecommand{\propositionname}{Proposition}
\providecommand{\remarkname}{Remark}
\providecommand{\theoremname}{Theorem}
\begin{document}
\title{A spectral mixture representation of\\ isotropic kernels with application\\ to random Fourier features}

\author[1]{Nicolas Langren\'e\thanks{Corresponding author, nicolaslangrene@bnbu.edu.cn}} 
\author[2]{Xavier Warin}
\author[2]{Pierre Gruet}

\affil[1]{\normalsize Guangdong Provincial/Zhuhai Key Laboratory of Interdisciplinary Research and Application for Data Science, Beijing Normal-Hong Kong Baptist University} 
\affil[2]{\normalsize EDF Lab Paris-Saclay, FiME (Laboratoire de Finance des March\'es de l'\'Energie)} 

\date{\vspace{-2mm}\today}

\maketitle
\vspace{-5mm}
\begin{abstract}
\citet{rahimi2007random} introduced the idea of decomposing positive definite shift-invariant kernels by randomly sampling from their spectral distribution  for machine learning applications. This famous technique, known as Random Fourier Features (RFF), is in principle applicable to any such kernel whose spectral distribution can be identified and simulated. In practice, however, it is usually applied to the Gaussian kernel because of its simplicity, since its spectral distribution is also Gaussian. Clearly, simple spectral sampling formulas would be desirable for broader classes of kernels. In this paper, we show that the spectral distribution of positive definite isotropic kernels in $\mathbb{R}^{d}$ for all $d\geq1$ can be decomposed as a scale mixture of $\alpha$-stable random vectors, and we identify the mixing distribution as a function of the kernel. This constructive decomposition provides a simple and ready-to-use spectral sampling formula for many multivariate positive definite shift-invariant kernels, including exponential power kernels, and generalized Cauchy kernels, as well as newly introduced kernels such as the generalized Mat\'ern, Tricomi, and Fox $H$ kernels. In particular, we retrieve the fact that the spectral distributions of these kernels, which can only be explicited in terms of the Fox $H$ special function, are scale mixtures of the multivariate Gaussian distribution, along with an explicit mixing distribution formula. This result has broad applications for support vector machines, kernel ridge regression, Gaussian processes, and other kernel-based machine learning techniques for which the random Fourier features technique is applicable.\\

\textbf{Keywords}: Spectral sampling, random Fourier features, random projections, spectral
density, stable distribution, isotropic kernels, generalized Mat\'ern kernel, Tricomi kernel, Fox $H$-kernel, $\alpha$-stable density.
\end{abstract}

\section{Introduction}

We start by fixing some notation and terminology regarding kernel
functions. Let $\tilde{K}:\mathbb{R}^{d}\times\mathbb{R}^{d}\rightarrow\mathbb{R}$
be a kernel function. It is said to be \textit{shift-invariant} (a.k.a.
translation-invariant, radially-symmetric, or stationary) if for any $\mathbf{x}_{i}\in\mathbb{R}^{d}$
and $\mathbf{x}_{j}\in\mathbb{R}^{d}$, $\tilde{K}(\mathbf{x}_{i},\mathbf{x}_{j})=K(\mathbf{x}_{i}-\mathbf{x}_{j})$
only depends on $\mathbf{x}_{i}$ and $\mathbf{x}_{j}$ through the
difference $\mathbf{x}_{i}-\mathbf{x}_{j}$. Moreover, the shift-invariant
kernel $K:\mathbb{R}^{d}\rightarrow\mathbb{R}$ is said to be \textit{isotropic} \citep{genton2001kernels} if it only depends on $\mathbf{x}_{i}$ and $\mathbf{x}_{j}$ through
the Euclidean norm $\left\Vert \mathbf{x}_{i}-\mathbf{x}_{j}\right\Vert $
of the difference $\mathbf{x}_{i}-\mathbf{x}_{j}$. More generally,
$K$ is said to be \textit{isotropic in the Mahalanobis distance} \citep{abe2005training}
or \textit{elliptically contoured} \citep{cambanis1981elliptically,vanmarcke2010random} if there
exists a positive definite matrix $\Sigma\in\mathbb{R}^{d\times d}$
such that $K$ only depends on $\mathbf{x}_{i}$ and $\mathbf{x}_{j}$
through the Mahalanobis distance $\sqrt{(\mathbf{x}_{i}-\mathbf{x}_{j})^{\top}\Sigma^{-1}(\mathbf{x}_{i}-\mathbf{x}_{j})}$
\citep{mahalanobis1936generalised} of the difference $\mathbf{x}_{i}-\mathbf{x}_{j}$.

Next, we say that the shift-invariant kernel $K:\mathbb{R}^{d}\rightarrow\mathbb{R}$
is \textit{positive definite} if for any $N\geq1$, $(\mathbf{x}_{1},\ldots,\mathbf{x}_{N})\in\mathbb{R}^{d\times N}$
and $(z_{1},\ldots,z_{N})\in\mathbb{R}^{N}$,
\begin{equation}
\sum_{i=1}^{N}\sum_{j=1}^{N}z_{i}z_{j}K(\mathbf{x}_{i}-\mathbf{x}_{j})\geq0.\label{eq:positive_definite-1}
\end{equation}
Denote by $\Phi_{\infty}$ the set of all continuous shift-invariant kernels
which are positive definite in $\mathbb{R}^{d}$ for all $d\geq1$.
According to Bochner's theorem \citep{bochner1933monotone,bochner1959lectures},
a continuous, shift-invariant kernel $K:\mathbb{R}^{d}\rightarrow\mathbb{R}$
is positive definite if and only if there exists a finite positive measure
$\mu$ such that
\[
K(\mathbf{u})=\intop_{\mathbb{R}^{d}}\exp(i\mathbf{x}^{\top}\mathbf{u})d\mu(\mathbf{x}),\ \ \mathbf{u}\in\mathbb{R}^{d}.
\]
In other words, $K$ is proportional to a characteristic function.
If the kernel is scaled such that $K(\mathbf{0})=1$, then $\mu$
is a probability measure. Suppose that it admits a density~$f$ (see
\citep[Theorem 1.8.16]{sasvari2013characteristic} for a characterization). Then
\begin{equation}
K(\mathbf{u})=K(\mathbf{0})\intop_{\mathbb{R}^{d}}\exp(i\mathbf{x}^{\top}\mathbf{u})f(\mathbf{x})d\mathbf{x},\ \ \mathbf{u}\in\mathbb{R}^{d}.\label{eq:multivariate_fourier_1}
\end{equation}
This shows that $K$ is the multivariate Fourier transform of $f$.
According to Bochner's theorem, $f$ is nonnegative if and only if
$K$ is positive definite. In other words, $f$ is a probability density
function, known as the \textit{spectral density} of $K$, if and only
if $K$ is positive definite.

In general, equation \eqref{eq:multivariate_fourier_1} is complex-valued.
Since a characteristic function is real-valued if and only if $f$ is symmetric
around zero (\citep[Lemma 1 page 499]{feller1971probability}, \citep[Theorem 1.3.13]{sasvari2013characteristic}), we
can further explicit equation \eqref{eq:multivariate_fourier_1} by
assuming that $f(\mathbf{x})=f(-\mathbf{x})$ for all $\mathbf{x}\in\mathbb{R}^{d}$,
in which case $K(\mathbf{u})=K(-\mathbf{u})$ for all $\mathbf{u}\in\mathbb{R}^{d}$
and 
\begin{equation}
K(\mathbf{u})=K(\mathbf{0})\intop_{\mathbb{R}^{d}}\cos(\mathbf{x}^{\top}\mathbf{u})f(\mathbf{x})d\mathbf{x},\ \ \mathbf{u}\in\mathbb{R}^{d}.\label{eq:K_wrt_f}
\end{equation}
Moreover, $K$ and $f$ are Fourier duals:
\begin{align}
f(\mathbf{x}) & =\frac{1}{K(\mathbf{0})(2\pi)^{d}}\intop_{\mathbb{R}^{d}}\cos(\mathbf{x}^{\top}\mathbf{u})K(\mathbf{u})d\mathbf{u},\ \ \mathbf{x}\in\mathbb{R}^{d}.\label{eq:f_wrt_K}
\end{align}
The interesting aspect of equation \eqref{eq:K_wrt_f} is that if
$K$ is positive definite, then $f$ is a probability
density function, and the following probabilistic representation holds
\begin{align}
K(\mathbf{u}) & =K(\mathbf{0})\mathbb{E}\left[\cos(\bm{\eta}^{\top}\mathbf{u})\right]\label{eq:random_projections}
\end{align}
where $\bm{\eta}=(\eta_{1},\ldots,\eta_{d})$ is a continuous random
vector with density $f$. The random vector $\bm{\eta}$ is known
as \textit{random projection}. The probabilistic representation \eqref{eq:random_projections}
suggests that the kernel $K$ can be approximated by Monte Carlo simulations:
\begin{equation}
K(\mathbf{u})\simeq K_{M}(\mathbf{u}):=\frac{K(\mathbf{0})}{M}\sum_{m=1}^{M}\cos(\bm{\eta}_{m}^{\top}\mathbf{u}),\ \ \mathbf{u}\in\mathbb{R}^{d}.\label{eq:random_fourier_features}
\end{equation}
This approach is known as \textit{random Fourier features} \citep{rahimi2007random}
in machine learning. While a direct evaluation of all the kernel values
$K(\mathbf{x}_{i}-\mathbf{x}_{j})$ on a dataset $\{\mathbf{x}_{1},\ldots,\mathbf{x}_{N}\}$
would cost a prohibitive $\mathcal{O}(N^{2})$ operations on large datasets,
equation \eqref{eq:random_fourier_features} reduces this computational
cost to $\mathcal{O}(MN)$, where $M$ is the number of spectral simulations
in \eqref{eq:random_fourier_features}, thanks to the following explicit
feature mapping:
\begin{align}
 & K_{M}(\mathbf{x}_{i}-\mathbf{x}_{j})=\varphi(\mathbf{x}_{i})^{\top}\varphi(\mathbf{x}_{j}),\ \mathrm{where,\ for\ any\ }\mathbf{x}\in\mathbb{R}^{d},\label{eq:feature_mapping_1}\\
 & \varphi(\mathbf{x}):=\frac{\sqrt{K(\mathbf{0})}}{\sqrt{M}}\left[\begin{array}{cccccc}
\!\cos(\bm{\eta}_{1}^{\top}\mathbf{x})\! & \!\ldots\! & \!\cos(\bm{\eta}_{M}^{\top}\mathbf{x})\! & \!\sin(\bm{\eta}_{1}^{\top}\mathbf{x})\! & \!\ldots\! & \!\sin(\bm{\eta}_{M}^{\top}\mathbf{x})\!\end{array}\right]^{\top}\!\in\mathbb{R}^{2M}\label{eq:feature_mapping_2}
\end{align}
This efficient approximation has broad applications in machine learning
and statistical learning, for example for algorithms such as kernel
ridge regression, Gaussian process inference, kernel principal component
analysis, support vector machines, and other kernel-based methods
\citep{francis2021major}.

It can be noted that the idea of spectral sampling predates \citep{rahimi2007random}.
In particular, \citep{shinozuka1972monte,shinozuka1972digital} proposed
to simulate from the spectral density of a covariance function in
order to efficiently simulate non-conditional Gaussian random fields.
This spectral representation method \citep{deodatis2025spectral}
for Gaussian simulations, and the closely related spectral turning bands method \citep{mantoglou1982turning}, have been applied in a number of fields, most
notably mechanical and structural engineering \citep{shinozuka1991simulation}
and geostatistics \citep{lantuejoul1994nonconditional,lantuejoul2002geostatistical}.
That being said, \citep{rahimi2007random} should still be credited
for introducing spectral sampling to the field of machine learning.

In order to implement equation \eqref{eq:random_fourier_features},
the spectral distribution of $K$ needs to be precomputed. \citet{rahimi2007random}
provide three examples of shift-invariant multivariate kernels amenable
to this method: the Gaussian kernel, the Laplace kernel and the Cauchy
kernel:
\begin{itemize}
\item The Gaussian kernel admits a Gaussian spectral density
\begin{eqnarray}
K(\mathbf{u})=\exp\left(-\frac{1}{2}\left\Vert \mathbf{u}\right\Vert ^{2}\right) & \ ,\  & f(\mathbf{x})=\frac{1}{(2\pi)^{d/2}}\exp\left(-\frac{1}{2}\left\Vert \mathbf{x}\right\Vert ^{2}\right)\ ,\label{eq:gaussian_kernel}
\end{eqnarray}
\item The Laplace kernel admits a Cauchy spectral density
\begin{eqnarray}
K(\mathbf{u})=\exp\left(-\sum_{\ell=1}^{d}\left|u_{\ell}\right|\right) & \ ,\  & f(\mathbf{x})=\frac{1}{\pi^{d}}\prod_{\ell=1}^{d}\frac{1}{1+x_{\ell}^{2}}\ ,\label{eq:laplace_kernel}
\end{eqnarray}
\item The Cauchy kernel admits a Laplace spectral density
\begin{eqnarray}
K(\mathbf{u})=\prod_{\ell=1}^{d}\frac{1}{1+u_{\ell}^{2}} & \ ,\  & f(\mathbf{x})=\frac{1}{2^{d}}\exp\left(-\sum_{\ell=1}^{d}\left|x_{\ell}\right|\right).\label{eq:cauchy_kernel}
\end{eqnarray}
\end{itemize}
where $\mathbf{u}=(u_{1},\ldots,u_{d})\in\mathbb{R}^{d}$ and $\mathbf{x}=(x_{1},\ldots,x_{d})\in\mathbb{R}^{d}$.
Remark that the two kernels \eqref{eq:laplace_kernel}-\eqref{eq:cauchy_kernel}
are not isotropic kernels but tensor kernels, obtained as the product
of univariate kernels; for the Gaussian kernel \eqref{eq:gaussian_kernel},
the isotropic and tensor formulations coincide. Remark also how the roles of $K$ and $f$
can be swapped: if $K$ is a nonnegative integrable kernel with spectral
density $f$, then $f/f(\mathbf{0})$ is a kernel with spectral density
$K/((2\pi)^{d}f(\mathbf{0}))$. This is a particular case of the duality
theorem in \citet{harrar2006duality}; see also \citep{gneiting1997mixture}.

In principle, the random Fourier features formula \eqref{eq:random_fourier_features}
can be applied to any positive definite shift-invariant kernel for
which the spectral density can be computed and simulated. In practice,
the simplicity and convenience of the three examples \eqref{eq:gaussian_kernel}-\eqref{eq:laplace_kernel}-\eqref{eq:cauchy_kernel},
especially the Gaussian kernel \eqref{eq:gaussian_kernel}, means
that there is not much work available on the application of random
Fourier features for other multivariate parametric kernels in machine
learning. In the context of Gaussian random field simulation, spectral
sampling has been applied to a few more kernel covariance functions
\citep{emery2006tbsim}. In $\Phi_{\infty}$, the most notable one
is the Mat\'ern kernel $K(\mathbf{u})=\frac{(\sqrt{2\nu}\left\Vert \mathbf{u}\right\Vert )^{\nu}}{\Gamma(\nu)2^{\nu-1}}\mathcal{K}_{\nu}(\sqrt{2\nu}\left\Vert \mathbf{u}\right\Vert )$,
$\nu>0$, \citep{matern1960spatial}, where $\mathcal{K}_{\beta}$ is the modified Bessel function \citep[10.25]{dlmf}, whose spectral density is a multivariate
Student $t$-distribution \citep{emery2016improved,chen2022kernel}.
In order to consider more general kernel functions, one clear challenge
is the ability to compute the multivariate inverse Fourier transform
\eqref{eq:f_wrt_K} analytically, and to find a suitable sampling
algorithm for the corresponding distribution.

This is the task that we propose to address in this paper, with a
focus on isotropic kernels which are positive definite in $\mathbb{R}^{d}$
for all $d\geq1$ ($K\in\Phi_{\infty}$). Our article makes the following
contributions:
\begin{itemize}
\item We notice that the random projections of kernels in $\Phi_{\infty}$
are necessarily Gaussian scale mixtures, since Schoenberg's theorem
characterizes kernels in $\Phi_{\infty}$ as Gaussian scale mixtures
\citep[Theorem 3.8.5]{sasvari2013characteristic} and that the Fourier
transform of a Gaussian scale mixture is also a Gaussian scale mixture
\citep{gneiting1997mixture}. For practical purposes, this means that
multiplying multivariate Gaussian simulations by simulations from
a specific nonnegative distribution conveniently extends the random
Fourier features methodology from Gaussian kernels to any positive
definite isotropic kernel in $\mathbb{R}^{d}$ for all $d\geq1$ (same
mixture simulations in every dimension) or tensor kernel (independent
mixture simulations in each dimension) of interest (Remark~\ref{rem:tensor_kernel}).
\item We prove that if a multivariate isotropic kernel $K(\mathbf{u})=k(\left\Vert \mathbf{u}\right\Vert ^{2})$
belongs to $\Phi_{\infty}$, then all the kernels $K_{\alpha}(\mathbf{u}):=k(\left\Vert \mathbf{u}\right\Vert ^{\alpha})$,
$\alpha\in(0,2]$, also belong to $\Phi_{\infty}$ (Theorem~\ref{thm:isotropic_random_projection}).
In other words, the generalization treatment that was applied to the
isotropic Cauchy kernel in \citep{gneiting2004stochastic} actually works for
every kernel in $\Phi_{\infty}$. This includes such kernels as the
Mat\'ern kernel \citep{matern1960spatial} and the confluent hypergeometric
kernel \citep{ma2023beyond}, whose generalized version can be found
in Table~\ref{tab:new_random_projections}.
\item Using results from the theories of multivariate stable distributions
\citep{samorodnitsky1994stable,devroye2014simulation,nolan2020stable}
and multivariate characteristic functions \citep{sasvari2013characteristic},
we prove that the spectral distributions of these generalized $\Phi_{\infty}$
kernels $k(\left\Vert \mathbf{u}\right\Vert ^{\alpha})$, $\alpha\in(0,2]$,
can conveniently be expressed as scale mixtures of multivariate symmetric
stable distributions. The scaling distribution is explicitly given
by the inverse Laplace transform of the kernel function $k$ (Theorem~\ref{thm:isotropic_random_projection},
Corollary~\ref{cor:isotropic_random_projection}). We provide explicit
formulas to simulate these spectral distributions using known results
about stable distributions \citep{chambers1976method,nolan2020stable}. 
\item We apply our scale mixture decomposition of random projections to
several shift-invariant kernels of interest in machine learning, such
as exponential power kernels, Mat\'ern kernels, and generalized Cauchy
kernels, and use it to construct new multivariate isotropic kernels
in $\Phi_{\infty}$. By defining the nonnegative spectral radius random
variable $R$ as a transform of one or two Gamma random variables,
we created new kernel functions such as the Beta, Kummer, Tricomi,
and Fox $H$ kernels (Table~\ref{tab:new_random_projections}). The
availability of new kernel functions with additional parameters is
of great practical interest, especially in the situation when classical
parametric kernels are too rigid to properly capture the information
available in the data.
\item Following \citep{schneider1986stable}, we show that the density of
symmetric stable vectors can be expressed in terms of the Fox $H$
generalized hypergeometric special function (Proposition~\ref{prop:fourier_exponential_power}).
Since the class of $H$-functions is stable by mixing \citep[equation~2.25.1]{prudnikov1990integrals},
we obtain that the spectral densities of all the kernels mentioned
in this article can be expressed analytically in terms of the Fox
$H$-function, with several explicit examples given in Appendix~\ref{sec:spectral_densities}.
These spectral density formulas are of interest in their own right;
that being said, for the practical implementation of spectral sampling,
our spectral mixture representation of these spectral densities (Theorem~\ref{thm:isotropic_random_projection})
is more convenient (Section~\ref{sec:sampling}).
\item We implement and illustrate these kernels along with their random
Fourier features approximations, implemented using our spectral mixture
representation. Our numerical experiments confirm the exactness of
our theoretical results. 
\end{itemize}
The rest of the paper is organized as follows. Section \ref{sec:spectral_mixture}
describes our characterization of random projections as scale mixtures,
and applies it to several examples of multivariate isotropic kernels.
Section \ref{sec:sampling} describes how to sample such random projections
in practice. Section \ref{sec:numerical} provides numerical examples,
and finally Section \ref{sec:conclusion} concludes the paper. Appendix~\ref{sec:characteristic_functions}
describes new mixing distributions, and Appendix~\ref{sec:spectral_densities}
provides analytical expressions for the spectral densities of several
kernels considered in this paper.

\section{Mixture representation of spectral distributions\label{sec:spectral_mixture}}

In this section, we show that the spectral distribution of (multivariate)
isotropic kernels in $\Phi_\infty$ can always be characterized as a scale mixture of
symmetric stable distributions (Theorem~\ref{thm:isotropic_random_projection}).
We characterize this scaling distribution and use it to decompose
the random projections of several multivariate isotropic kernels of
interest, and easily create new ones (Table~\ref{tab:new_random_projections}). To do so, we need the following definition.
\begin{defn}
\label{def:symmetric_stable}For any $\alpha\in(0,2]$, let $\boldsymbol{S}_{\alpha}$
be a $d$-dimensional random vector with characteristic function $\phi_{\alpha}$
given by
\begin{equation}
\phi_{\alpha}(\mathbf{u})=\mathbb{E}\left[e^{i\boldsymbol{S}_{\alpha}^{\top}\mathbf{u}}\right]=e^{-\left\Vert \mathbf{u}\right\Vert ^{\alpha}}\ ,\ \mathbf{u}\in\mathbb{R}^{d}\label{eq:symmetric_stable-1}
\end{equation}
where $\left\Vert \mathbf{u}\right\Vert =\sqrt{u_{1}^{2}+\ldots+u_{d}^{2}}$
is the Euclidean norm of $\mathbf{u}=(u_{1},\ldots,u_{d})\in\mathbb{R}^{d}$.
The vector $\boldsymbol{S}_{\alpha}$ is called \textit{symmetric
stable} (see for example \citep{samorodnitsky1994stable}).
\end{defn}
\begin{lem}
\label{lem:isotropic_random_projection}For any $\alpha\in(0,2]$,
let $\boldsymbol{S}_{\alpha}$ be a $d$-dimensional symmetric stable
random vector (Definition~\ref{def:symmetric_stable}), let $R$
be a real-valued nonnegative random variable, independent of $\boldsymbol{S}_{\alpha}$,
with Laplace transform $\mathcal{L}$, and let $\boldsymbol{\Sigma}\in\mathbb{R}^{d\times d}$
be a symmetric, positive definite matrix. Then, the random projection
vector defined by
\begin{equation}
\boldsymbol{\eta}=R^{\frac{1}{\alpha}}\boldsymbol{\Sigma}^{-\frac{1}{2}}\boldsymbol{S}_{\alpha}\label{eq:isotropic_random_projection}
\end{equation}
spans the following isotropic kernel $K:\mathbb{R}^{d}\rightarrow\mathbb{R}$:
\begin{equation}
K(\mathbf{u})=K(\mathbf{0})\mathbb{E}\left[e^{i\boldsymbol{\eta}^{\top}\mathbf{u}}\right]=K(\mathbf{0})\mathcal{\mathcal{L}}(\left\Vert \mathbf{u}\right\Vert _{\boldsymbol{\Sigma}}^{\alpha})\ ,\ \mathbf{u}\in\mathbb{R}^{d}.\label{eq:isotropic_kernel}
\end{equation}
where $\left\Vert \mathbf{u}\right\Vert _{\boldsymbol{\Sigma}}:=\sqrt{\mathbf{u}^{T}\boldsymbol{\Sigma}^{-1}\mathbf{u}}$
is the Mahalanobis norm of $\mathbf{u}$ with respect to the covariance
matrix $\boldsymbol{\Sigma}$.
\end{lem}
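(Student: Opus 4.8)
The plan is to obtain the characteristic function of $\boldsymbol{\eta}$ by a direct conditioning argument on the mixing variable $R$, recognize the result as a Laplace transform, and then verify that the function so obtained is indeed a continuous isotropic positive definite kernel.

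First I would condition on $R$. Since $\boldsymbol{S}_{\alpha}$ is independent of $R$, for a fixed $\mathbf{u}\in\mathbb{R}^{d}$ the tower property gives
\begin{equation}
\mathbb{E}\!\left[e^{i\boldsymbol{\eta}^{\top}\mathbf{u}}\,\middle|\,R\right]=\mathbb{E}\!\left[e^{i(\lambda R)^{1/\alpha}\boldsymbol{S}_{\alpha}^{\top}\mathbf{u}}\,\middle|\,R\right]=\phi_{\alpha}\!\big((\lambda R)^{1/\alpha}\mathbf{u}\big).
\end{equation}
Then I would invoke the scaling (homogeneity) property of $\phi_{\alpha}$ coming from Definition~\ref{def:symmetric_stable}: for any scalar $c\geq0$ one has $\phi_{\alpha}(c\mathbf{u})=e^{-\left\Vert c\mathbf{u}\right\Vert ^{\alpha}}=e^{-c^{\alpha}\left\Vert \mathbf{u}\right\Vert ^{\alpha}}$, so with $c=(\lambda R)^{1/\alpha}\geq0$ the conditional characteristic function is $e^{-\lambda R\left\Vert \mathbf{u}\right\Vert ^{\alpha}}$. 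Taking expectations over $R$ (the interchange being immediate since $|e^{i\boldsymbol{\eta}^{\top}\mathbf{u}}|\leq1$) then yields
\begin{equation}
\mathbb{E}\!\left[e^{i\boldsymbol{\eta}^{\top}\mathbf{u}}\right]=\mathbb{E}\!\left[e^{-\lambda R\left\Vert \mathbf{u}\right\Vert ^{\alpha}}\right]=\mathcal{L}\!\big(\lambda\left\Vert \mathbf{u}\right\Vert ^{\alpha}\big),
\end{equation}
the last step being the definition of the Laplace transform $\mathcal{L}(s)=\mathbb{E}[e^{-sR}]$ evaluated at the nonnegative argument $s=\lambda\left\Vert \mathbf{u}\right\Vert ^{\alpha}$. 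Multiplying by the constant $K(\mathbf{0})$ gives \eqref{eq:isotropic_kernel}.

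It then remains to check that $K$ defined by \eqref{eq:isotropic_kernel} is a legitimate kernel of the stated type. Positive definiteness and continuity follow from Bochner's theorem, because $\mathbf{u}\mapsto\mathbb{E}[e^{i\boldsymbol{\eta}^{\top}\mathbf{u}}]$ is a characteristic function (continuity at each point also following directly from dominated convergence); the normalization is consistent since $\mathcal{L}(0)=\mathbb{E}[e^{0}]=1$, so that the right-hand side of \eqref{eq:isotropic_kernel} equals $K(\mathbf{0})$ at $\mathbf{u}=\mathbf{0}$. Isotropy is apparent from the closed form, as $K(\mathbf{u})$ depends on $\mathbf{u}$ only through $\left\Vert \mathbf{u}\right\Vert $; equivalently, $\boldsymbol{\eta}$ is rotationally invariant in distribution because $\boldsymbol{S}_{\alpha}$ is. There is essentially no real obstacle in this proof: the only points needing a little care are the measurability and independence bookkeeping in the conditioning step, and the observation that $\mathcal{L}$ is being applied on its natural domain $[0,\infty)$, both of which are routine.
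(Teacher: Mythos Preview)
Your proof is correct and follows essentially the same route as the paper's own proof: condition on $R$, apply the characteristic function of $\boldsymbol{S}_{\alpha}$ to the scaled argument, and recognize the resulting expectation as the Laplace transform of $R$ evaluated at $\lambda\|\mathbf{u}\|^{\alpha}$. The additional remarks you make on positive definiteness, continuity, normalization, and isotropy are not in the paper's proof of this lemma but are correct and harmless extras.
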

\begin{proof}
Recall that the Laplace transform $\mathcal{L}$ of a nonnegative
random variable $R$ is defined by $\mathcal{L}(s)=\mathbb{E}\left[e^{-sR}\right]$
for $s\geq0$. Then, the characteristic function of $\bm{\eta}$ is
given by
\begin{align*}
\mathbb{E}\left[e^{i\boldsymbol{\eta}^{\top}\mathbf{u}}\right] & =\mathbb{E}\left[\mathbb{E}\left[\exp\left(i\boldsymbol{S}_{\alpha}^{\top}\left(R^{\frac{1}{\alpha}}\boldsymbol{\Sigma}^{-\frac{1}{2}}\mathbf{u}\right)\right)\left|R\right.\right]\right]\\
 & =\mathbb{E}\left[\exp\left(-R\left\Vert \boldsymbol{\Sigma}^{-\frac{1}{2}}\mathbf{u}\right\Vert ^{\alpha}\right)\right]\\
 & =\mathcal{\mathcal{L}}(\left\Vert \mathbf{u}\right\Vert _{\boldsymbol{\Sigma}}^{\alpha})
\end{align*}
which proves equation \eqref{eq:isotropic_kernel}.
\end{proof}
Lemma~\ref{lem:isotropic_random_projection} has previously been
used for example in \citep{pakes1998mixture} in the case $d=1$ and
identity matrix $\boldsymbol{\Sigma}=\boldsymbol{I}$. 
Now, the following Theorem~\ref{thm:isotropic_random_projection}
provides a characterization of the spectral distribution of continuous,
positive definite, isotropic kernels in $\Phi_{\infty}$ as a scale mixture of $\alpha$-stable
random vectors. Theorem~\ref{thm:isotropic_random_projection} sets
$\boldsymbol{\Sigma}=\boldsymbol{I}$ for simplicity, but its results
still hold for general covariance matrix $\boldsymbol{\Sigma}$ (Remark~\ref{rem:mahalanobis_kernel}).
\begin{thm}
\label{thm:isotropic_random_projection}Let $K(\mathbf{u})=k(\left\Vert \mathbf{u}\right\Vert ^{2})$,
$\mathbf{u}\in\mathbb{R}^{d}$, be a kernel function in $\Phi_{\infty}$. Then 

a) there exists a nonnegative random variable $R$ such that its Laplace
transform is equal to $k/k(0)$, 

b) for every $\alpha\in(0,2]$, the isotropic kernel $K_{\alpha}(\mathbf{u}):=k(\left\Vert \mathbf{u}\right\Vert ^{\alpha})$
is also positive definite in $\mathbb{R}^{d}$ for every $d\geq1$, 

c) for every $\alpha\in(0,2]$, the unique random projection vector $\boldsymbol{\eta}_{\alpha}$
of the kernel $K_{\alpha}$ admits the representation $\boldsymbol{\eta}_{\alpha}=R^{\frac{1}{\alpha}}\boldsymbol{S}_{\alpha}$, where $\boldsymbol{S}_{\alpha}$ is a symmetric stable random vector
independent of $R$.
\end{thm}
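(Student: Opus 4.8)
The plan is to build on the three classical results that the introduction and Lemma~\ref{lem:isotropic_random_projection} put in place: Schoenberg's characterization of $\Phi_{\infty}$, Bernstein's theorem on completely monotone functions, and the scale-mixture identity of Lemma~\ref{lem:isotropic_random_projection}. Recall that Schoenberg's theorem says $K(\mathbf{u})=k(\|\mathbf{u}\|^2)\in\Phi_\infty$ if and only if $k$ is completely monotone on $[0,\infty)$, i.e. $(-1)^n k^{(n)}(t)\geq 0$ for all $n\geq0$ and $t>0$. I would use this as the entry point for part~a).

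For part~a), since $k$ is completely monotone and $k(0)$ is finite and positive (it is $K(\mathbf{0})>0$ after the usual normalization; if $k(0)=0$ then $K\equiv0$ and the statement is vacuous), the normalized function $k/k(0)$ is completely monotone with value $1$ at the origin. By Bernstein's theorem \citep{bernstein1929fonctions} (or \citep[Theorem 1.4]{schilling2012bernstein}), a completely monotone function that equals $1$ at $0$ is exactly the Laplace transform of a probability measure on $[0,\infty)$; that is, there exists a nonnegative random variable $R$ with $\mathcal{L}_R(s)=\mathbb{E}[e^{-sR}]=k(s)/k(0)$ for all $s\geq0$. This is the cleanest part.

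For part~b), fix $\alpha\in(0,2]$ and apply Lemma~\ref{lem:isotropic_random_projection} with $\lambda=1$, this very $R$, and a symmetric $\alpha$-stable vector $\boldsymbol{S}_\alpha$ independent of $R$: the lemma produces $\boldsymbol{\eta}_\alpha=R^{1/\alpha}\boldsymbol{S}_\alpha$ whose characteristic function is $\mathcal{L}_R(\|\mathbf{u}\|^\alpha)=k(\|\mathbf{u}\|^\alpha)/k(0)=K_\alpha(\mathbf{u})/k(0)$. Since a (scaled) characteristic function is by Bochner's theorem a positive definite function, and since the same construction works in every dimension $d\geq1$ (the stable law $\boldsymbol S_\alpha$ exists in all dimensions and $R$ does not depend on $d$), $K_\alpha\in\Phi_\infty$. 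Part~c) then says that $\boldsymbol{\eta}_\alpha=R^{1/\alpha}\boldsymbol{S}_\alpha$ is \emph{the} random projection of $K_\alpha$; existence is exactly the construction just given, so only uniqueness remains. Uniqueness of the random projection is just uniqueness of the spectral measure: equation~\eqref{eq:f_wrt_K} recovers the spectral density (more generally the spectral measure) from $K_\alpha$ by inverse Fourier transform, and the Fourier transform is injective on finite measures, so the distribution of $\boldsymbol\eta_\alpha$ is pinned down by $K_\alpha$.

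The only genuine subtlety — the step I would flag as the main obstacle — is making sure the chain ``$k$ completely monotone $\Rightarrow$ $R$ exists $\Rightarrow$ $k(\|\cdot\|^\alpha)$ is a valid characteristic function in every dimension'' uses each classical ingredient at the right level of generality. Specifically: Schoenberg's theorem must be invoked for kernels positive definite in \emph{all} dimensions simultaneously (that is what forces complete monotonicity, not mere positive-definiteness in a fixed $d$); Bernstein's theorem delivers a bona fide \emph{probability} measure only because of the normalization $k(0)/k(0)=1$; and Lemma~\ref{lem:isotropic_random_projection} already packages the interchange of expectations (conditioning on $R$, using $\mathbb{E}[e^{i\boldsymbol S_\alpha^\top \mathbf t}]=e^{-\|\mathbf t\|^\alpha}$), so no further analytic estimates are needed — the Fubini/tower-property justification is legitimate since everything in sight is bounded. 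I would also note in passing that $\alpha=2$ recovers the Gaussian case, $\boldsymbol S_2\sim\mathcal N(\mathbf 0,2I_d)$, which is consistent with the Schoenberg--Gneiting remark in the introduction that random projections of $\Phi_\infty$ kernels are Gaussian scale mixtures.
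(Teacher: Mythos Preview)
Your proposal is correct and follows essentially the same route as the paper's proof: Schoenberg's theorem combined with the Hausdorff--Bernstein--Widder (Bernstein) theorem for part~a), Lemma~\ref{lem:isotropic_random_projection} with $\lambda=1$ plus Bochner for part~b), and Fourier/characteristic-function uniqueness for part~c). The only cosmetic difference is that you make the intermediate ``$k$ is completely monotone'' step explicit, whereas the paper cites the combined statement \citep[Theorem~3.9.8]{sasvari2013characteristic} directly.
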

\begin{proof}
Part a) is a rephrasing of Schoenberg's theorem for kernel functions in $\Phi_{\infty}$ \citep[Theorem~3.8.5]{sasvari2013characteristic}
and the Hausdorff-Bernstein-Widder theorem \citep[Theorem~3.9.6]{sasvari2013characteristic},
which we need to introduce the random variable $R\geq0$ whose existence
is needed in parts b) and c). As clearly stated in \citep[Theorem~3.9.8]{sasvari2013characteristic},
the fact that the isotropic kernel $K(\mathbf{u})=k(\left\Vert \mathbf{u}\right\Vert ^{2})$,
$\mathbf{u}\in\mathbb{R}^{d}$, is continuous and positive definite
for every $d\geq1$ is equivalent to the fact that there exists a
finite nonnegative measure $\mu$ on $[0,\infty)$ such that $k(u)=\int_{0}^{\infty}e^{-us}d\mu(s)$
for all $u\in[0,\infty)$. This means that $k(0)$ is finite and there
exists a nonnegative random variable $R\geq0$ such that $k(u)=k(0)\mathbb{E}\left[e^{-uR}\right]$.
In other words, $k/k(0)$ is the Laplace transform of $R$. 

Then, using Lemma~\ref{lem:isotropic_random_projection} with this random variable $R$ and $\boldsymbol{\Sigma}=\boldsymbol{I}$, the random projection vector $\boldsymbol{\eta}_{\alpha}=R^{\frac{1}{\alpha}}\boldsymbol{S}_{\alpha}$
spans the isotropic kernel $K_{\alpha}(\mathbf{u})=k(\left\Vert \mathbf{u}\right\Vert ^{\alpha})$,
where we used the fact that $K(\mathbf{0})=k(0)$. The existence of
$\boldsymbol{\eta}_{\alpha}$ proves b) by Bochner's theorem, and
the Uniqueness Theorem~1.3.3 in \citep{sasvari2013characteristic}
completes the proof of c).
\end{proof}
It is sometimes more convenient to work with a kernel defined as $k(\left\Vert \mathbf{u}\right\Vert )$
(as in the definition of isotropic kernels) instead of $k(\left\Vert \mathbf{u}\right\Vert ^{2})$.
A simple change of variable in Theorem~\ref{thm:isotropic_random_projection}
gives the following Corollary:
\begin{cor}\label{cor:isotropic_random_projection}Let $K(\mathbf{u})=k(\left\Vert \mathbf{u}\right\Vert )$,
$\mathbf{u}\in\mathbb{R}^{d}$, be a kernel function in $\Phi_{\infty}$. Then 

a) there exists a nonnegative random variable $R$ such that its Laplace
transform is equal to $k(\sqrt{.})/k(0)$, 

b) for every $\alpha\in(0,1]$, the isotropic kernel $K_{\alpha}(\mathbf{u}):=k(\left\Vert \mathbf{u}\right\Vert ^{\alpha})$
is also positive definite in $\mathbb{R}^{d}$ for every $d\geq1$, 

c) for every $\alpha\in(0,1]$, the unique random projection vector $\boldsymbol{\eta}_{\alpha}$
of the kernel $K_{\alpha}$ admits the representation $\boldsymbol{\eta}_{\alpha}=R^{\frac{1}{2\alpha}}\boldsymbol{S}_{2\alpha}$, where $\boldsymbol{S}_{2\alpha}$ is a symmetric stable random vector
independent of $R$.
\end{cor}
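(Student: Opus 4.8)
The plan is to reduce Corollary~\ref{cor:isotropic_random_projection} to Theorem~\ref{thm:isotropic_random_projection} by a change of variable that replaces $\left\Vert\mathbf{u}\right\Vert$ with $\left\Vert\mathbf{u}\right\Vert^{2}$. Concretely, given $K(\mathbf{u})=k(\left\Vert\mathbf{u}\right\Vert)$ in $\Phi_{\infty}$, I would introduce the auxiliary radial profile $\tilde{k}(t):=k(\sqrt{t})$ for $t\geq0$, so that $K(\mathbf{u})=\tilde{k}(\left\Vert\mathbf{u}\right\Vert^{2})$. Since $K$ is continuous, positive definite in every dimension, and isotropic, it lies in $\Phi_{\infty}$ in the form required by Theorem~\ref{thm:isotropic_random_projection} with $k$ there replaced by $\tilde{k}$. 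Part~a) of the Corollary is then immediate: Theorem~\ref{thm:isotropic_random_projection}a) yields a nonnegative random variable $R$ whose Laplace transform is $\tilde{k}/\tilde{k}(0)=k(\sqrt{\cdot})/k(0)$, using $\tilde{k}(0)=k(0)$.

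For parts~b) and c), I would apply Theorem~\ref{thm:isotropic_random_projection}b)--c) to $\tilde{k}$ with stability index $\beta\in(0,2]$: the kernel $\tilde{K}_{\beta}(\mathbf{u}):=\tilde{k}(\left\Vert\mathbf{u}\right\Vert^{\beta})$ is positive definite in every $\mathbb{R}^{d}$, and its unique random projection is $R^{1/\beta}\boldsymbol{S}_{\beta}$. Now observe that $\tilde{k}(\left\Vert\mathbf{u}\right\Vert^{\beta})=k\bigl(\left\Vert\mathbf{u}\right\Vert^{\beta/2}\bigr)$, so setting $\alpha:=\beta/2$ we get $\tilde{K}_{\beta}=K_{\alpha}$ with $K_{\alpha}(\mathbf{u})=k(\left\Vert\mathbf{u}\right\Vert^{\alpha})$. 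As $\beta$ ranges over $(0,2]$, the parameter $\alpha=\beta/2$ ranges over $(0,1]$, which explains the restricted range in the Corollary. The random projection of $K_{\alpha}$ is therefore $R^{1/\beta}\boldsymbol{S}_{\beta}=R^{1/(2\alpha)}\boldsymbol{S}_{2\alpha}$, and uniqueness is inherited from the uniqueness statement in Theorem~\ref{thm:isotropic_random_projection}c) (ultimately the uniqueness theorem for characteristic functions). This gives b) and c).

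I do not anticipate a serious obstacle; the only point that requires a little care is bookkeeping the exponent substitution $\beta\mapsto\alpha=\beta/2$ consistently across all three parts, and making sure the range $(0,1]$ for $\alpha$ is correctly matched to the range $(0,2]$ for $\beta$ (so that $\boldsymbol{S}_{2\alpha}$ is a legitimate symmetric stable vector, since $2\alpha\in(0,2]$). One should also note in passing that $K_{1}(\mathbf{u})=k(\left\Vert\mathbf{u}\right\Vert)=K(\mathbf{u})$ recovers the original kernel, corresponding to $\beta=2$ and hence to the Gaussian vector $\boldsymbol{S}_{2}$ (scaled multivariate normal), which is consistent with the Schoenberg-type Gaussian-mixture picture described in the introduction.
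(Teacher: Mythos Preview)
Your proposal is correct and matches the paper's own approach: the paper states that the corollary follows from Theorem~\ref{thm:isotropic_random_projection} by a simple change of variable, which is exactly the substitution $\tilde{k}(t)=k(\sqrt{t})$ and $\beta=2\alpha$ that you spell out. Your bookkeeping of the exponent ranges and the resulting representation $R^{1/(2\alpha)}\boldsymbol{S}_{2\alpha}$ is accurate.
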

In the case $\alpha=2$ in Theorem~\ref{thm:isotropic_random_projection},
one can easily retrieve the fact that the random projection is a scale mixture of
a Gaussian vector:
\begin{cor}
\label{cor:alpha_2}In the case $\alpha=2$, the continuous, positive
definite, isotropic kernel $K(\mathbf{u})=k(\left\Vert \mathbf{u}\right\Vert ^{2})$
admits the random projection vector $\boldsymbol{\eta}=\sqrt{2R}\boldsymbol{N}$,
where the distribution of $R$ is the inverse Laplace transform of $k/k(0)$, and $\boldsymbol{N}$
is a $d$-dimensional standard Gaussian vector independent of $R$.
\end{cor}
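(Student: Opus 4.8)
The plan is simply to specialize Theorem~\ref{thm:isotropic_random_projection} to $\alpha=2$ and then recognize the resulting symmetric stable vector as a rescaled Gaussian. First I would invoke part~a) of Theorem~\ref{thm:isotropic_random_projection}: since $K(\mathbf{u})=k(\left\Vert \mathbf{u}\right\Vert ^{2})$ lies in $\Phi_{\infty}$, there exists a nonnegative random variable $R$ whose Laplace transform equals $k/k(0)$, i.e.\ whose law is the inverse Laplace transform of $k/k(0)$. Then part~c) with $\alpha=2$ gives that the unique random projection of $K=K_{2}$ is $\boldsymbol{\eta}=R^{\frac{1}{2}}\boldsymbol{S}_{2}$, with $\boldsymbol{S}_{2}$ a symmetric stable vector independent of $R$.

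Next I would identify $\boldsymbol{S}_{2}$ explicitly. By Definition~\ref{def:symmetric_stable}, $\boldsymbol{S}_{2}$ has characteristic function $\mathbf{u}\mapsto e^{-\left\Vert \mathbf{u}\right\Vert ^{2}}$. A $d$-dimensional standard Gaussian vector $\boldsymbol{N}$ has characteristic function $e^{-\left\Vert \mathbf{u}\right\Vert ^{2}/2}$, so $\sqrt{2}\,\boldsymbol{N}$ has characteristic function $e^{-\left\Vert \sqrt{2}\,\mathbf{u}\right\Vert ^{2}/2}=e^{-\left\Vert \mathbf{u}\right\Vert ^{2}}$. By the uniqueness theorem for multivariate characteristic functions (Theorem~1.3.3 in \citep{sasvari2013characteristic}), $\boldsymbol{S}_{2}$ and $\sqrt{2}\,\boldsymbol{N}$ have the same distribution.

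Substituting this identification into the representation from part~c), $\boldsymbol{\eta}=R^{\frac{1}{2}}\boldsymbol{S}_{2}\stackrel{d}{=}\sqrt{2R}\,\boldsymbol{N}$ with $\boldsymbol{N}$ independent of $R$, which is exactly the claimed representation; the accompanying kernel formula is just equation~\eqref{eq:isotropic_kernel} of Lemma~\ref{lem:isotropic_random_projection} with $\lambda=1$ and $\alpha=2$, namely $K(\mathbf{u})=K(\mathbf{0})\mathcal{L}(\left\Vert \mathbf{u}\right\Vert ^{2})$. I do not expect any real obstacle here: the corollary is a direct specialization, and the only point needing (minor) care is fixing the scaling constant $\sqrt{2}$ that relates the $2$-stable vector of Definition~\ref{def:symmetric_stable} to the standard Gaussian, which is pinned down by matching the two characteristic functions above.
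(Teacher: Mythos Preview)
Your proposal is correct and follows exactly the paper's own proof: apply Theorem~\ref{thm:isotropic_random_projection} with $\alpha=2$ and use $\boldsymbol{S}_{2}\overset{d}{=}\sqrt{2}\,\boldsymbol{N}$ from Definition~\ref{def:symmetric_stable}. The only difference is that you spell out the characteristic-function matching that justifies this last identity, which the paper leaves implicit.
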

\begin{proof}
Apply Theorem~\ref{thm:isotropic_random_projection} with $\alpha=2$
and use the fact that $\boldsymbol{S}_{2}\overset{d}{=}\sqrt{2}\boldsymbol{N}$
where $\boldsymbol{N}$ is a $d$-dimensional standard Gaussian vector
(from Definition~\ref{def:symmetric_stable}).
\end{proof}
\begin{rem}
\label{rem:mahalanobis_kernel}The results of Theorem~\ref{thm:isotropic_random_projection}
still hold if the Euclidean norm $\left\Vert \mathbf{u}\right\Vert =\sqrt{\mathbf{u}^{\top}\mathbf{u}}$
is replaced by the Mahalanobis norm $\left\Vert \mathbf{u}\right\Vert _{\boldsymbol{\Sigma}}=\sqrt{\mathbf{u}^{\top}\Sigma^{-1}\mathbf{u}}$
where $\boldsymbol{\Sigma}\in\mathbb{R}^{d\times d}$ is a symmetric,
positive definite matrix, in which case the random projection vector
is given by $\boldsymbol{\eta}=R^{\frac{1}{\alpha}}\boldsymbol{\Sigma}^{-\frac{1}{2}}\boldsymbol{S}_{\alpha}$.
This is another possible approach to introduce additional parameters
into the formula of a multivariate parametric kernel.
\end{rem}
\begin{rem}
Corollary~\ref{cor:alpha_2} shows that the random projection vector
of the continuous, positive definite, isotropic kernel $K(\mathbf{u})=k(\left\Vert \mathbf{u}\right\Vert ^{2})$
is a scale mixture of a standard Gaussian vector. In fact, the same
is true for all the kernels $K_{\alpha}(\mathbf{u})=k(\left\Vert \mathbf{u}\right\Vert ^{\alpha})$,
$\alpha\in(0,2]$, as pointed out later in Proposition~\ref{prop:multi_exp_power}
(see Corollary~\ref{cor:scale_mixture_gaussian}).
\end{rem}
\begin{rem}
\label{rem:characteristic_function}The distribution of the nonnegative random variable
$R$ in Theorem~\ref{lem:isotropic_random_projection} is the inverse
Laplace transform of the scaled kernel $k/k(0)$. In other words,
the Laplace transform $\mathcal{L}$ of $R$ is equal to $\mathcal{L}(s)=k(s)/k(0)$
for every $s\geq0$. Equivalently, $R$ can be characterised by its
characteristic function $\phi(s)=\mathbb{E}\left[e^{isR}\right]$.
Indeed, for a nonnegative random variable the equality $\mathcal{\mathcal{L}}(s)=\phi(is)$
holds for every $s\geq0$. This means that for every $\alpha\in(0,2]$,
the continuous, positive definite, isotropic kernel $K_{\alpha}(\mathbf{u})=k(\left\Vert \mathbf{u}\right\Vert ^{\alpha})$
can be written as
\begin{equation}
K_{\alpha}(\mathbf{u})=k(\left\Vert \mathbf{u}\right\Vert ^{\alpha})=k(0)\mathcal{\mathcal{L}}(\left\Vert \mathbf{u}\right\Vert ^{\alpha})=k(0)\phi(i\left\Vert \mathbf{u}\right\Vert ^{\alpha})\label{eq:laplace_characteristic}
\end{equation}
for every $\mathbf{u}\in\mathbb{R}^{d}$. The characteristic function formulation \eqref{eq:laplace_characteristic}
is more convenient to construct new positive definite multivariate
isotropic kernels from a given nonnegative distribution for $R$,
since characteristic functions are much more commonly precomputed
and available for a vast range of positive distributions \citep{oberhettinger1973fourier,oberhettinger1990fourier}.
This is the approach adopted in Table~\ref{tab:new_random_projections} with a scaling factor $\lambda>0$, except for the last two Fox $H$-kernels, for which we first evaluated the Laplace transform $\mathcal{L}$ directly in Appendix~\ref{sec:characteristic_functions}.
\end{rem}
In the following, we provide several examples of interest for which
both the kernel $K$ and the distribution of the random variable $R\geq0$
from Theorem~\ref{thm:isotropic_random_projection} are known analytically.
We also use the approach described in Remark~\ref{rem:characteristic_function}
to create new multivariate positive definite kernels. These examples are listed in Table~\ref{tab:new_random_projections}. Other valid
kernel examples can be deduced for example from \citep{gneiting1997mixture} and \citep{mcnolty1973characteristic}, for continuous and discrete spectral mixing distributions respectively. Table~\ref{tab:new_random_projections}
makes use of the following special functions:
\begin{itemize}
\item $\Gamma$ is the gamma function \citep[5.2]{dlmf}
\item $\mathcal{B}(a,b)$ is the beta function \citep[5.12]{dlmf}
\item $\mathcal{K}_{\beta}$ is the modified Bessel function \citep[10.25]{dlmf},
\item $\mathcal{M}(a,b,z)$ is the Kummer confluent hypergeometric function
\citep[13.2]{dlmf}, also denoted as $_{1}F_{1}(a,b,z)$ in some references,
\item $\mathcal{U}(a,b,z)$ is the Tricomi confluent hypergeometric function,
a.k.a. Kummer's function of the second kind \citep[13.2]{dlmf}.
\item $H_{p,q}^{m,n}\bigg(z\bigg|\begin{array}{l}
{\scriptstyle (a_{1},\alpha_{1}),\ldots,(a_{p},\alpha_{p})}\\
{\scriptstyle (b_{1},\beta_{1}),\ldots,(b_{q},\beta_{q})}
\end{array}\bigg)$ is the Fox $H$-function \citep{fox1961hfunctions,mathai2010hfunction,coelho2019finite}.
\end{itemize}

Table~\ref{tab:new_random_projections} also uses the following random
variables:
\begin{itemize}
\item $\boldsymbol{N}$ is a standard multivariate Gaussian random vector,
\item $G_{\beta}$, $\beta>0$, is a Gamma random variable, with density
$f(x)=\frac{1}{\Gamma(\beta)}x^{\beta-1}e^{-x}\,\mathbbm{1}_{\{x>0\}}$,
\item $B_{\beta,\gamma}$, $\beta>0$, $\gamma>0$, is a Beta random variable,
with density given by $f(x)=\frac{x^{\beta-1}(1-x)^{\gamma-1}}{\mathcal{B}(\beta,\gamma)}\mathbbm{1}_{\{0<x<1\}}$. It can be obtained from two independent Gamma random
variables $G_{\beta}$ and $G_{\gamma}$ as $B_{\beta,\gamma}\overset{d}{=}\frac{G_{\beta}}{G_{\beta}+G_{\gamma}}$,
\item $F_{2\beta,2\gamma}$, $\beta>0$, $\gamma>0$, is a Fisher-Snedecor
random variable, with density $f(x)=\frac{1}{x\mathcal{B}(\beta,\gamma)}\frac{(\beta x)^{\beta}\gamma^{\gamma}}{(\beta x+\gamma)^{\beta+\gamma}}\mathbbm{1}_{\{x>0\}}$. It can be obtained from two independent Gamma random variables
$G_{\beta}$ and $G_{\gamma}$ as $F_{2\beta,2\gamma}\overset{d}{=}\frac{\gamma G_{\beta}}{\beta G_{\gamma}}$. It is a rescaling of the beta prime distribution \citep[equation~(13.1)]{crooks2019field}.
\end{itemize}

\begin{table}[H]
\renewcommand{\arraystretch}{1.3} 
\begin{centering}
\begin{tabular}{llll}
\hline 
 & Name & Formula & Scaling \tabularnewline
\hline 
 &  &  & \tabularnewline
$R=\ $ & Constant $1$ & $\phi(x)=e^{ix}$ & \tabularnewline
$K=\ $ & Exponential power & $K(\mathbf{u})=e^{-\left\Vert \mathbf{u}\right\Vert ^{\alpha}}$ & $\lambda=1$\tabularnewline
 &  &  & \tabularnewline
$R=\ $ & Gamma $ G_\beta\ ,\ \beta>0$ & $\phi(x)=(1-ix)^{-\beta}$ & \tabularnewline
$K=\ $ & Generalized Cauchy & $K(\mathbf{u})=\frac{1}{\left(1+\frac{\left\Vert \mathbf{u}\right\Vert ^{\alpha}}{2\beta}\right)^{\beta}}$ & $\lambda=\frac{1}{2\beta}$\tabularnewline
 &  &  & \tabularnewline
$R=\ $ & \makecell[l]{Inverse Gamma\\ $1/ G_\beta ,\ \beta>0$}  & $\phi(x)=\frac{2(-ix)^{\beta/2}}{\Gamma(\beta)}\mathcal{K}_{\beta}(\sqrt{-4ix})$ & \tabularnewline
$K=\ $ & Generalized Mat\'ern & $K(\mathbf{u})=\frac{(\sqrt{2\beta}\left\Vert \mathbf{u}\right\Vert ^{\frac{\alpha}{2}})^{\beta}}{\Gamma(\beta)2^{\beta-1}}\mathcal{K}_{\beta}(\sqrt{2\beta}\left\Vert \mathbf{u}\right\Vert ^{\frac{\alpha}{2}})$ & $\lambda=\frac{\beta}{2}$\tabularnewline
 &  &  & \tabularnewline
$R=\ $ & \makecell[l]{Beta\\ $B_{\beta,\gamma}\ ,\ \beta>0,\ \gamma>0$} & $\phi(x)=\mathcal{M}(\beta,\beta+\gamma,ix)$ & \tabularnewline
$K=\ $ & Kummer & $K(\mathbf{u})=\mathcal{M}(\beta,\beta+\gamma,-\left\Vert \mathbf{u}\right\Vert ^{\alpha})$ & $\lambda=1$\tabularnewline
 &  &  & \tabularnewline
$R=\ $ & \makecell[l]{Beta-exponential\\ $-\log(B_{\beta,\gamma}),\beta>0,\gamma>0$} & $\phi(x)=\frac{\mathcal{B}(\beta-ix,\gamma)}{\mathcal{B}(\beta,\gamma)}$ & \tabularnewline
$K=\ $ & Beta & $K(\mathbf{u})=\frac{\mathcal{B}(\beta+\left\Vert \mathbf{u}\right\Vert ^{\alpha},\gamma)}{\mathcal{B}(\beta,\gamma)}$ & $\lambda=1$\tabularnewline
 &  &  & \tabularnewline
$R=\ $ & \makecell[l]{$F$-distribution\\ $F_{2\beta,2\gamma}\ ,\ \beta>0, \gamma>0$} & $\phi(x)=\frac{\Gamma\left(\beta+\gamma\right)}{\Gamma\left(\gamma\right)}\,\mathcal{U}\!\left(\beta,1-\gamma,-i\frac{\gamma}{\beta}x\right)$ & \tabularnewline
$K=\ $ & Tricomi & $K(\mathbf{u})=\frac{\Gamma\left(\beta+\gamma\right)}{\Gamma\left(\gamma\right)}\,\mathcal{U}\!\left(\beta,1-\gamma,\frac{\gamma}{\beta}\left\Vert \mathbf{u}\right\Vert ^{\alpha}\right)$ & $\lambda=1$\tabularnewline
 &  &  & \tabularnewline
$R=\ $ & \makecell[l]{Stacy $G_{\beta}^{\frac{1}{\ell}}\ ,\ \beta>0, \ell>0$} & $\phi(x)=\frac{1}{\Gamma(\beta)}H_{1,1}^{1,1}\Big(-ix\,\Big|\begin{array}{l}
{\scriptstyle (1-\beta,\frac{1}{\ell})}\\
{\scriptstyle (0,1)}
\end{array}\Big)$ & \tabularnewline
$K=\ $ & Fox $H_{1,1}^{1,1}$ & $K(\mathbf{u})=\frac{1}{\Gamma(\beta)}H_{1,1}^{1,1}\Big(\left\Vert \mathbf{u}\right\Vert ^{\alpha}\Big|\begin{array}{l}
{\scriptstyle (1-\beta,\frac{1}{\ell})}\\
{\scriptstyle (0,1)}
\end{array}\Big)$ & $\lambda=1$\tabularnewline
 &  &  & \tabularnewline
$R=\ $ & \makecell[l]{Generalized beta prime\\ $(G_{\beta}/G_{\gamma})^\frac{1}{\ell}\ ,\ \beta, \gamma, \ell>0$} & $\phi(x)=\frac{1}{\Gamma(\beta)\Gamma(\gamma)}H_{1,2}^{2,1}\Big(-ix\,\Big|\begin{array}{l}
{\scriptstyle (1-\beta,\frac{1}{\ell})}\\
{\scriptstyle (0,1),(\gamma,\frac{1}{\ell})}
\end{array}\Big)$ & \tabularnewline
$K=\ $ & Fox $H_{1,2}^{2,1}$ & $K(\mathbf{u})=\frac{1}{\Gamma(\beta)\Gamma(\gamma)}H_{1,2}^{2,1}\Big(\left\Vert \mathbf{u}\right\Vert ^{\alpha}\Big|\begin{array}{l}
{\scriptstyle (1-\beta,\frac{1}{\ell})}\\
{\scriptstyle (0,1),(\gamma,\frac{1}{\ell})}
\end{array}\Big)$ & $\lambda=1$\tabularnewline
 &  &  & \tabularnewline
\hline 
\end{tabular}
\par\end{centering}
\caption{Characteristic functions $\phi$ of random mixture distributions $R\geq0$ in the random projection formula $\boldsymbol{\eta}=(\lambda R)^{\frac{1}{\alpha}}\boldsymbol{S}_{\alpha}$, scaled with $\lambda>0$, and resulting covariance kernels $K(\mathbf{u})=K(\mathbf{0})\phi(i\lambda\left\Vert \mathbf{u}\right\Vert ^{\alpha})$.\label{tab:new_random_projections}}
\end{table}

Particular cases of covariance kernels of interest from Table~\ref{tab:new_random_projections}
include:
\begin{itemize}
\item The Laplace kernel $K(\mathbf{u})=e^{-\left\Vert \mathbf{u}\right\Vert }$
($R=1$, $\lambda=1$, $\alpha=1$), also known as exponential kernel
\citep{rasmussen2006gaussian} (see for example \citep[equation~(4)]{francis2021major}). The random projection $\boldsymbol{\eta}=\boldsymbol{S}_{1}$
follows a standard multivariate Cauchy distribution \citep[Lemma 3.7.3]{sasvari2013characteristic} \citep{devroye2014simulation}.
\item The Gaussian kernel $K(\mathbf{u})=e^{-\frac{\left\Vert \mathbf{u}\right\Vert ^{2}}{2}}$
($R=1$, $\lambda=1/2$, $\alpha=2$), also known as squared exponential
kernel \citep{rasmussen2006gaussian}. The random projection $\boldsymbol{\eta}=\boldsymbol{S}_{2}(1/2)^{\frac{1}{2}}=\boldsymbol{N}$
follows a standard multivariate Gaussian distribution.
\item The exponential power kernel $K(\mathbf{u})=e^{-\left\Vert \mathbf{u}\right\Vert ^{\alpha}}$
($R=1$, $\lambda=1$, $\alpha\in(0,2]$) \citep[(21.4)]{crooks2019field},
also known as generalized Gaussian \citep{dytso2018analytical}, generalized
normal \citep{pogany2010characteristic}, $\gamma$-exponential \citep{rasmussen2006gaussian}, stable \citep{lantuejoul2002geostatistical}, stretched exponential \citep{cardona2007history}, Kohlrausch \citep{kohlrausch1854theorie},
or Subbotin \citep{subbotin1923law} kernel. The random projection
$\boldsymbol{\eta}=\boldsymbol{S}_{\alpha}$ follows a symmetric stable
distribution \citep{devroye2006nonuniform,devroye2014simulation}.
\item The Mat\'ern-$\nu$ kernel $K(\mathbf{u})=\frac{(\sqrt{2\nu}\left\Vert \mathbf{u}\right\Vert )^{\nu}}{\Gamma(\nu)2^{\nu-1}}\mathcal{K}_{\nu}(\sqrt{2\nu}\left\Vert \mathbf{u}\right\Vert )$,
$\nu>0$ ($R=1/ G_\nu$, $\lambda=\frac{\nu}{2}$, $\alpha=2$) \citep{porcu2024matern}, also known as multivariate symmetric Laplace \citep[equation~(5.2.2)]{kotz2001laplace}, or $K$-Bessel \citep{lantuejoul2002geostatistical} kernel.
The random projection $\boldsymbol{\eta}=\boldsymbol{S}_{2}(\nu/(2G_\nu))^{\frac{1}{2}}=\boldsymbol{N}/\sqrt{G_\nu/\nu}$
follows a standard multivariate Student $t_{2\nu}$ distribution with
$2\nu$ degrees of freedom \citep[equation~(1.2)]{kotz2004multivariate}. 
\item The power kernel $K(\mathbf{u})=\frac{1}{1+\left\Vert \mathbf{u}\right\Vert ^{\alpha}}$
($R= G_1$, $\lambda=1$, $\alpha\in(0,2]$). The random projection
$\boldsymbol{\eta}=\boldsymbol{S}_{\alpha}E^{\frac{1}{\alpha}}$,
where $E$ is a standard exponential random variable, follows a Linnik
distribution \citep{linnik1953linear}, \citep{laha1961class}, \citep{devroye1990linnik},
also known as Linnik-Laha distribution.
\item The Student $t$ kernel $K(\mathbf{u})=\left(1+\frac{\left\Vert \mathbf{u}\right\Vert ^{2}}{2\beta}\right)^{-\beta}$,
$\beta>0$, with $2\beta-1$ degrees of freedom ($R=G_\beta$, $\lambda=\frac{1}{2\beta}$,
$\alpha=2$), also known as rational quadratic \citep{rasmussen2006gaussian},
or generalized inverse multiquadric \citep{hu1998collocation} kernel.
The random projection $\boldsymbol{\eta}=\boldsymbol{S}_{2}(G_\beta/(2\beta))^{\frac{1}{2}}=\boldsymbol{N}\sqrt{\frac{ G_\beta}{\beta}}$
follows a Mat\'ern distribution, also known as generalized Laplace
distribution \citep[Definition 4.1.1]{kotz2001laplace} (see \citep[Proposition 4.1.2]{kotz2001laplace}), \citep[Theorem 3.7.5]{sasvari2013characteristic},
which is a particular case of the variance-gamma distribution \citep{fischer2025variance},
also known as Bessel function distribution \citep{mckay1932bessel},
see \citep[equation (10)]{gneiting1997mixture}.
\item The generalized Cauchy kernel $K(\mathbf{u})=\frac{1}{\left(1+\left\Vert \mathbf{u}\right\Vert ^{\alpha}\right)^{\beta}}$,
$\beta>0$ ($R=G_\beta$, $\lambda=1$, $\alpha\in(0,2]$) \citep{gneiting2004stochastic},
also known as generalized Pearson~VII \citep[page 144]{crooks2019field} kernel.
The random projection $\boldsymbol{\eta}=\boldsymbol{S}_{\alpha} G_\beta^{\frac{1}{\alpha}}$
follows a generalized Linnik distribution \citep{devroye2006nonuniform,devroye2014simulation}.
The scaling $\lambda=\frac{1}{2\beta}$ proposed in Table~\ref{tab:new_random_projections}
(by analogy with Mat\'ern kernels) is such that $K(\mathbf{u})=\frac{1}{\left(1+\frac{\left\Vert \mathbf{u}\right\Vert ^{\alpha}}{2\beta}\right)^{\beta}}\underset{\beta\rightarrow\infty}{\longrightarrow}e^{-\frac{\left\Vert \mathbf{u}\right\Vert ^{\alpha}}{2}}$,
which is an exponential power kernel (with scaling $\lambda=1/2$)
which contains the Gaussian kernel as the particular case $\alpha=2$.
\item The confluent hypergeometric kernel $K(\mathbf{u})=\frac{\Gamma\left(\beta+\gamma\right)}{\Gamma\left(\gamma\right)}\,\mathcal{U}\!\left(\beta,1-\gamma,\gamma\left\Vert \mathbf{u}\right\Vert ^{2}\right)$
($R=F_{2\beta,2\gamma}$, $\lambda=\beta$, $\alpha=2$) \citep{ma2023beyond}
is a special case of the proposed Tricomi kernel with $\alpha=2$.
The random projection $\boldsymbol{\eta}=\boldsymbol{S}_{2}(\beta F_{2\beta,2\gamma})^{\frac{1}{2}}=\boldsymbol{N}\sqrt{2\gamma}\sqrt{\frac{G_{\beta}}{G_{\gamma}}}$
is a Gaussian mixture with a beta prime random radius \citep{bai2021beta, bevilacqua2025fast}, with density given by Corollary~\ref{cor:confluent_hypergeometric_spectral_density}.
\item Since $F_{2\beta,2\gamma}\overset{d}{=}(G_{\beta}/\beta)/(G_{\gamma}/\gamma)$
and $\underset{a\rightarrow\infty}{\lim}\frac{G_{a}}{a}=1$,
the proposed Tricomi kernel $K(\mathbf{u})=\frac{\Gamma\left(\beta+\gamma\right)}{\Gamma\left(\gamma\right)}\,\mathcal{U}\!\left(\beta,1-\gamma,\frac{\gamma}{\beta}\left\Vert \mathbf{u}\right\Vert ^{\alpha}\right)$
contains generalized Mat\'ern kernels (when $\beta\rightarrow\infty$)
and generalized Cauchy kernels (when $\gamma\rightarrow\infty$) as
limit cases, and therefore contains all the classical stationary kernels
(Laplace, Gaussian, Mat\'ern, Student, Power, Exponential Power,
etc.) as particular limit cases. 
\item The Fox $H$-kernels have been obtained by computing analytically
the characteristic functions of the Stacy random variable $G_{\beta}^{\frac{1}{\ell}}$
and the generalized beta prime random variable $(G_{\beta}/G_{\gamma})^{\frac{1}{\ell}}$
(Appendix~\ref{sec:characteristic_functions}). In fact, the Fox
$H$-function \citep{fox1961hfunctions} is a very general special function which encompasses
all the kernels mentioned in this research article, as well as their
spectral densities (Appendix~\ref{sec:spectral_densities}). 
\end{itemize}
\begin{rem}
Symmetric stable random vectors (Definition \ref{def:symmetric_stable})
only exist when $\alpha\in(0,2]$. As a result, for every kernel in
Table~\ref{tab:new_random_projections}, the parameter $\alpha$ is
restricted to the interval $(0,2]$ to enforce positive definiteness.
When setting $\alpha$ to a value larger than $2$, the inverse Fourier
transform \eqref{eq:f_wrt_K} of the kernel is not
a density anymore as it can take negative values. Explicit examples
from the literature include the exponential power kernel with $\alpha=4$
and $\alpha=6$ \citep{jones1992asymptotic}, and $\alpha=3$ \citep{dytso2018analytical},
as well as the generalized Cauchy kernel with $\alpha=4$ (Laha distribution \citep{laha1958example},
its inverse Fourier transform is the Silverman~kernel \citep{tsybakov2009nonparametric}) and $\alpha=6$
\citep{jones1992asymptotic}. 
\end{rem}
\begin{rem}
\label{rem:tensor_kernel}This section focuses on isotropic kernels,
which can be written as $K(\mathbf{u})=k(\left\Vert \mathbf{u}\right\Vert )$
where $k$ is a univariate kernel and $\left\Vert \mathbf{u}\right\Vert $
is the Euclidean norm of the vector $\mathbf{u}\in\mathbb{R}^{d}$.
Another classical way to construct multivariate kernels is the tensor
structure, a.k.a. separable structure \citep{vanmarcke2010random},
where $K(\mathbf{u})=\prod_{\ell=1}^{d}k(u_{\ell})$ is the product
of univariate kernels, as shown with the examples \eqref{eq:laplace_kernel}
and \eqref{eq:cauchy_kernel} from \citet{rahimi2007random}. Other
constructions include kernel sums $K(\mathbf{u})=\sum_{\ell=1}^{d}K_{\ell}(\mathbf{u})$
\citep{genton2001kernels,rasmussen2006gaussian} and kernel tensor
sums, a.k.a. additive kernels, $K(\mathbf{u})=\sum_{\ell=1}^{d}k(u_{\ell})$
\citep{rasmussen2006gaussian,durrande2012additive,langrene2019fast},
which are easier to handle due to additivity. The random projection
$\bm{\eta}=(\eta_{1},\ldots,\eta_{d})$ of a tensor kernel $K(\mathbf{u})=\prod_{\ell=1}^{d}k(u_{\ell})$
is a vector with i.i.d. components with distribution equal to the
spectral distribution of the univariate kernel $k$. In view of this,
the results from this section, in particular Table~\ref{tab:new_random_projections},
can also be used to generate new tensor kernels, by setting $d=1$
in equation~\eqref{eq:isotropic_random_projection} and simulating
the resulting univariate random projection $\eta=(\lambda R)^{\frac{1}{\alpha}}S_{\alpha}$
$d$ times independently. That being said, the results from this paper
suggest that there is little reason to favour tensor kernels over
isotropic kernels when resorting to the RFF approximation,
if only because simulating the spectral distribution of isotropic
kernels is faster than for tensor kernels, as shown in Remark \ref{rem:isotropic_vs_tensor}
in the next section.
\end{rem}
\begin{rem}
Theorem~\ref{thm:isotropic_random_projection} and Corollary~\ref{cor:isotropic_random_projection}
suggest that, in order to compare two positive definite isotropic
kernels $K_{1}$ and $K_{2}$ in $\Phi_{\infty}$, it is sufficient
to compare the distributions of their respective spectral scaling
$R_{1}\geq0$ and $R_{2}\geq0$. This is a much simpler task, since
kernels are defined in $\mathbb{R}^{d}$ while the spectral scaling
radii are univariate. As an illustration, the result in \citep{faouzi2024convergence}
connecting Cauchy and Mat\'ern covariance functions can be proved
alternatively without working directly with the density of the generalized
Linnik distribution (Proposition~\ref{prop:fourier_generalized_cauchy}),
by using the fact that the spectral scaling of generalized Cauchy
kernels has a Gamma distribution, and the fact that $G_{\beta}/\beta\overset{a.s.}{\rightarrow}1$
when $\beta\rightarrow\infty$ for a Gamma random variable $G_{\beta}$.
This alternative approach also implies that no other connection between
Cauchy and Mat\'ern covariance functions can exist besides the limiting
case identified in \citep{faouzi2024convergence}, since there is
no way to transform a Gamma distribution into an inverse Gamma distribution
by simple parameter rescaling.
\end{rem}

\section{Sampling spectral distributions\label{sec:sampling}}

In order to implement the random Fourier features approximation formula
\eqref{eq:random_fourier_features}, recalled below,
\[
K_{M}(\mathbf{u})=\frac{K(\mathbf{0})}{M}\sum_{m=1}^{M}\cos(\bm{\eta}_{m}^{\top}\mathbf{u}),\ \ \mathbf{u}\in\mathbb{R}^{d},
\]
one needs to sample from the random projection vector $\boldsymbol{\eta}$.
This article focuses on random projections of the form $\boldsymbol{\eta}_{\alpha}=R^{\frac{1}{\alpha}}\boldsymbol{S}_{\alpha}$,
$\alpha\in(0,2]$, where $\boldsymbol{S}_{\alpha}$ is a symmetric
stable random vector independent of the nonnegative random variable
$R$ (Theorem~\ref{thm:isotropic_random_projection}). Therefore,
in order to simulate the random projections from Table~\ref{tab:new_random_projections},
one needs to be able to simulate both $\boldsymbol{S}_{\alpha}$ and
$R$ in practice.

\paragraph*{Gamma random variables}

All the nonnegative random variables $R$ in Table~\ref{tab:new_random_projections}
can be obtained from independent simulations of Gamma random variables, which are elementary distributions for which simulation routines are widely available. 
Popular approaches to simulate Gamma random variables include acceptance-rejection
and numerical inversion, see \citet{luengo2022gamma}.

\paragraph*{Symmetric stable random vectors}

The following Proposition describes how to obtain simulations of symmetric
stable random vectors, in the form of scale mixtures of multivariate
Gaussian random vectors.
\begin{prop}
\label{prop:multi_exp_power}Let $\boldsymbol{N}$ be a $d$-dimensional
standard Gaussian vector, let $U_{1}$ and $U_{2}$ be two independent
standard uniform random variables, independent of $\boldsymbol{N}$,
let $W=-\log(U_{1})$ be a standard exponential random variable, and
let $\Theta=\pi\!\left(U_{2}-\frac{1}{2}\right)$ be a uniform random
variable in $\left[-\frac{\pi}{2},\frac{\pi}{2}\right]$. Then, for
any $\alpha\in(0,2]$, the multivariate symmetric stable distribution
$\boldsymbol{S}_{\alpha}$ (Definition \ref{def:symmetric_stable})
admits the following decomposition
\begin{equation}
\boldsymbol{S}_{\alpha}\overset{d}{=}\sqrt{2A_{\alpha}}\boldsymbol{N}\label{eq:multi_symmetric_stable_1}
\end{equation}
where 
\begin{equation}
A_{\alpha}:=\frac{\sin\!\left(\frac{\alpha\pi}{4}+\frac{\alpha}{2}\Theta\right)}{(\cos(\Theta))^{2/\alpha}}\left(\frac{\cos\!\left(\frac{\alpha\pi}{4}+\left(\frac{\alpha}{2}-1\right)\Theta\right)}{W}\right)^{\frac{2}{\alpha}-1}\label{eq:multi_symmetric_stable_2}
\end{equation}
\end{prop}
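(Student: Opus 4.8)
The plan is to establish \eqref{eq:multi_symmetric_stable_1}--\eqref{eq:multi_symmetric_stable_2} by combining two classical ingredients: the sub-Gaussian representation of isotropic symmetric stable vectors, and the Chambers--Mallows--Stuck (CMS) formula for simulating a positive stable random variable. First I would recall the \emph{sub-Gaussian} (or scale-mixture) representation: for $\alpha\in(0,2]$, a $d$-dimensional isotropic symmetric $\alpha$-stable vector with characteristic function $e^{-\|\mathbf{u}\|^{\alpha}}$ can be written as $\sqrt{2A}\,\boldsymbol{N}$, where $\boldsymbol{N}$ is standard Gaussian and $A$ is a \emph{positive} (one-sided) $\tfrac{\alpha}{2}$-stable random variable with the correct scaling, independent of $\boldsymbol{N}$ (see e.g.\ \citet{samorodnitsky1994stable}, Prop.~1.3.1 and its isotropic consequence). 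Conditionally on $A$, the vector $\sqrt{2A}\,\boldsymbol{N}$ is Gaussian with covariance $2A\,\mathbf{I}_d$, so its conditional characteristic function is $e^{-A\|\mathbf{u}\|^{2}}$; taking expectations over $A$ gives $\mathbb{E}[e^{-A\|\mathbf{u}\|^{2}}] = \mathcal{L}_A(\|\mathbf{u}\|^{2})$, and the identity $\mathcal{L}_A(s) = e^{-s^{\alpha/2}}$ is precisely the defining Laplace transform of a positive $\tfrac{\alpha}{2}$-stable law normalised so that the resulting vector matches Definition~\ref{def:symmetric_stable}. I should state the exact normalisation constant explicitly (the standard choice $\mathcal{L}_A(s)=e^{-s^{\alpha/2}}$ works, since $\alpha/2\in(0,1]$) and verify it yields \eqref{eq:symmetric_stable-1} with no stray constants.

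The second step is to produce the explicit formula \eqref{eq:multi_symmetric_stable_2} for $A_\alpha$ by writing down a simulation recipe for this positive $\tfrac{\alpha}{2}$-stable variable. The CMS method \citep{chambers1976method} gives, for a positive stable law of index $\delta\in(0,1)$, a representation as a deterministic function of one uniform angle and one exponential, of the form
\begin{equation}
S_\delta \;\overset{d}{=}\; \frac{\sin(\delta(\Theta'+b))}{(\cos\Theta')^{1/\delta}}\left(\frac{\cos(\Theta'-\delta(\Theta'+b))}{W}\right)^{\frac{1-\delta}{\delta}},\nonumber
\end{equation}
where $\Theta'$ is uniform on an appropriate interval, $W$ is standard exponential, and $b$ is the skewness/shift parameter fixed by total skewness. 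Setting $\delta=\alpha/2$ and carefully matching the scale convention to $\mathcal{L}_A(s)=e^{-s^{\alpha/2}}$, the constant $b$ becomes $\pi/2$, the angle $\Theta'$ can be taken uniform on $[-\pi/2,\pi/2]$ (this is the $\Theta$ in the statement), and a direct substitution $\delta=\alpha/2$ converts $1/\delta$ into $2/\alpha$ and $(1-\delta)/\delta$ into $2/\alpha-1$, reproducing \eqref{eq:multi_symmetric_stable_2} term by term: $\sin(\tfrac{\alpha}{2}(\Theta+\tfrac{\pi}{2})) = \sin(\tfrac{\alpha\pi}{4}+\tfrac{\alpha}{2}\Theta)$ in the numerator, $(\cos\Theta)^{2/\alpha}$ in the denominator, and $\cos(\Theta-\tfrac{\alpha}{2}(\Theta+\tfrac{\pi}{2})) = \cos(\tfrac{\alpha\pi}{4}+(\tfrac{\alpha}{2}-1)\Theta)$ inside the last bracket. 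Finally, I would observe that the two boundary cases are consistent: at $\alpha=2$ the bracketed exponent $\tfrac{2}{\alpha}-1$ vanishes, $A_2 = \sin(\tfrac{\pi}{2}+\Theta)/\cos\Theta = \cos\Theta/\cos\Theta = 1$, so $\boldsymbol{S}_2 \overset{d}{=}\sqrt{2}\,\boldsymbol{N}$, matching Corollary~\ref{cor:alpha_2}; and $W = -\log U_1$, $\Theta = \pi(U_2-\tfrac12)$ are exactly the stated reparametrisations in terms of uniforms.

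The main obstacle is purely bookkeeping of normalisation constants: the CMS formula appears in the literature with several incompatible scale/skewness conventions (Zolotarev's $A$ and $B$ forms, Nolan's $S_0$/$S_1$ parametrisations), and one must pin down the precise scale so that the positive stable variable $A_\alpha$ has Laplace transform exactly $e^{-s^{\alpha/2}}$ — any multiplicative slip there propagates into a wrong constant in front of $\|\mathbf{u}\|^\alpha$ and breaks Definition~\ref{def:symmetric_stable}. I would handle this by verifying the Laplace transform of $A_\alpha$ directly (or by checking the characteristic function $\mathbb{E}[e^{i\boldsymbol{S}_\alpha^\top\mathbf{u}}] = \mathbb{E}[e^{-A_\alpha\|\mathbf{u}\|^2}] = e^{-\|\mathbf{u}\|^\alpha}$), using the known fact that the CMS construction with shift $b=\pi/2$ on $\delta\in(0,1)$ yields the \emph{standardised} positive stable law whose Laplace transform is $e^{-s^{\delta}}$. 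Everything else — the conditional Gaussian computation, the trigonometric simplifications, the $\alpha=2$ sanity check — is routine.
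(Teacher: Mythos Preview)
Your proposal is correct and follows essentially the same route as the paper: the sub-Gaussian representation of isotropic symmetric $\alpha$-stable vectors (the paper cites \citep[Proposition~2.5.2]{samorodnitsky1994stable} rather than Prop.~1.3.1, together with $\boldsymbol{S}_2\sim\mathcal{N}(\mathbf{0},2\mathbf{I}_d)$), combined with the CMS-type simulation formula for the totally skewed positive $\tfrac{\alpha}{2}$-stable mixing variable (the paper cites \citep[Theorem~1.3]{nolan2020stable} with skewness $\beta=1$, which is the same formula you derive from \citep{chambers1976method} with $\delta=\alpha/2$ and $b=\pi/2$). Your additional bookkeeping of the Laplace-transform normalisation, the trigonometric identifications, and the $\alpha=2$ sanity check are all sound and simply make explicit what the paper leaves to the cited references.
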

\begin{proof}
To obtain the Gaussian mixture representation \eqref{eq:multi_symmetric_stable_1}-\eqref{eq:multi_symmetric_stable_2}
of symmetric stable vectors, apply \citep[Proposition 2.5.2]{samorodnitsky1994stable},
use the fact that $\boldsymbol{S}_{2}\sim\mathcal{N}(\mathbf{0},2\mathbf{I}_{d})$
is a multivariate Gaussian random vector with independent components
with mean zero and variance 2 (where $\mathbf{I}_{d}$ is the $d$-dimensional
identity matrix), and use the simulation formula for nonsymmetric
stable random variable from \citep[Theorem 1.3]{nolan2020stable}
with maximum skew $\beta=1$. 
\end{proof}
\begin{cor}
\label{cor:scale_mixture_gaussian}A consequence of Theorem~\ref{thm:isotropic_random_projection}
($\boldsymbol{\eta}_{\alpha}=R^{\frac{1}{\alpha}}\boldsymbol{S}_{\alpha}$)
and Proposition~\ref{prop:multi_exp_power} ($\boldsymbol{S}_{\alpha}=\sqrt{2A_{\alpha}}\boldsymbol{N}$)
is that any continuous, positive definite, isotropic kernel $K(\mathbf{u})=k(\left\Vert \mathbf{u}\right\Vert ^{\alpha})$ in $\Phi_\infty$,
$\mathbf{u}\in\mathbb{R}^{d}$, where $\alpha\in(0,2]$, admits a
representation of its random projections as a scale mixture of Gaussians,
explicitly given by
\begin{equation}
\boldsymbol{\eta}_{\alpha}=(R^{\frac{1}{\alpha}}\sqrt{2A_{\alpha}})\boldsymbol{N},\label{eq:scale_mixture_gaussian}
\end{equation}
where the distribution of the nonnegative random variable $R\geq0$
is given by the inverse Laplace transform of $k/k(0)$,  $A_{\alpha}$
is defined by equation \eqref{eq:multi_symmetric_stable_2}, and $R$,
$A_{\alpha}$ and $\boldsymbol{N}$ are independent.
\end{cor}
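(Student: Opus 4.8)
The plan is to splice together the two representations already in hand. Starting from a kernel $K(\mathbf{u})=k(\left\Vert \mathbf{u}\right\Vert ^{\alpha})\in\Phi_{\infty}$ with $\alpha\in(0,2]$, I would first invoke Theorem~\ref{thm:isotropic_random_projection}: its hypotheses supply a nonnegative random variable $R$ whose Laplace transform is $k/k(0)$ (unique in distribution by the characteristic function uniqueness theorem), and they identify the unique random projection vector of $K$ as $\boldsymbol{\eta}_{\alpha}=R^{\frac{1}{\alpha}}\boldsymbol{S}_{\alpha}$, where $\boldsymbol{S}_{\alpha}$ is a symmetric stable vector (Definition~\ref{def:symmetric_stable}) independent of $R$.

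Next I would substitute the Gaussian-mixture form of $\boldsymbol{S}_{\alpha}$ given by Proposition~\ref{prop:multi_exp_power}, namely $\boldsymbol{S}_{\alpha}\overset{d}{=}\sqrt{2A_{\alpha}}\,\boldsymbol{N}$ with $A_{\alpha}$ defined by \eqref{eq:multi_symmetric_stable_2} and $\boldsymbol{N}$ a standard Gaussian vector independent of $A_{\alpha}$. The one point needing care is that this substitution survives multiplication by $R^{\frac{1}{\alpha}}$: since $R$ is independent of $\boldsymbol{S}_{\alpha}$, the joint law of $(R,\boldsymbol{S}_{\alpha})$ coincides with that of $(R,\sqrt{2A_{\alpha}}\boldsymbol{N})$ once $(A_{\alpha},\boldsymbol{N})$ is realised independently of $R$; pushing this identity of joint laws forward through the measurable map $(r,\mathbf{s})\mapsto r^{\frac{1}{\alpha}}\mathbf{s}$ gives $\boldsymbol{\eta}_{\alpha}=R^{\frac{1}{\alpha}}\boldsymbol{S}_{\alpha}\overset{d}{=}(R^{\frac{1}{\alpha}}\sqrt{2A_{\alpha}})\boldsymbol{N}$ with $R$, $A_{\alpha}$ and $\boldsymbol{N}$ mutually independent, which is precisely \eqref{eq:scale_mixture_gaussian}.

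Because both ingredients are already proved, there is no real obstacle here beyond the independence bookkeeping just sketched. If a self-contained verification is preferred, one can compute the characteristic function of $(R^{\frac{1}{\alpha}}\sqrt{2A_{\alpha}})\boldsymbol{N}$ directly: conditioning first on $(R,A_{\alpha})$ and using the Gaussian characteristic function gives $\mathbb{E}[\exp(-R^{2/\alpha}A_{\alpha}\left\Vert \mathbf{u}\right\Vert ^{2})]$; conditioning next on $R$ and using the scalar content of Proposition~\ref{prop:multi_exp_power}, $\mathbb{E}[\exp(-A_{\alpha}t^{2})]=e^{-|t|^{\alpha}}$, collapses this to $\mathbb{E}[e^{-R\left\Vert \mathbf{u}\right\Vert ^{\alpha}}]=k(\left\Vert \mathbf{u}\right\Vert ^{\alpha})/k(0)$, which is \eqref{eq:isotropic_kernel}; the characteristic function uniqueness theorem then identifies $(R^{\frac{1}{\alpha}}\sqrt{2A_{\alpha}})\boldsymbol{N}$ with the random projection of $K$, yielding \eqref{eq:scale_mixture_gaussian}.
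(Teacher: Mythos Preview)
Your proposal is correct and follows exactly the route the paper intends: the corollary is stated in the paper without a separate proof, since its own wording already presents it as the immediate combination of Theorem~\ref{thm:isotropic_random_projection} and Proposition~\ref{prop:multi_exp_power}. Your added care about the independence bookkeeping (and the optional characteristic-function check) is sound and goes slightly beyond what the paper spells out, but it is the same argument.
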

This corollary has useful practical implications. For example,
it suggests that a task such as kernel learning via learning spectral
distributions can be brought down to learning the parameter $\alpha\in(0,2]$ and the distribution of
the univariate nonnegative random radius $R$. It can also suggest
variance reduction techniques by splitting the effort between the
Gaussian vector $\boldsymbol{N}$ \citep{pages2003optimal} and the
random scaling factor $R^{\frac{1}{\alpha}}\sqrt{2A_{\alpha}}$.

\begin{rem}
\label{rem:isotropic_vs_tensor}In view of equation \eqref{eq:scale_mixture_gaussian},
simulating the random projection vector of an isotropic kernel in $\Phi_\infty$ requires
to simulate $d+2$ independent random variables (the vector $\boldsymbol{N}$
of size $d$, the random variable $A_{\alpha}$ and the random variable
$R$). By contrast, simulating the random projection vector of a tensor
kernel requires to simulate $3d$ independent random variables ($d$
independent simulations of the scalar random variable $\eta=(R^{\frac{1}{\alpha}}\sqrt{2A_{\alpha}})N$
which involves the three random variables $N$, $A_{\alpha}$ and
$R$). This shows that the isotropic formulation of multivariate kernels
is almost three times more efficient than the tensor formulation when
using the random Fourier features approach with the scale mixture
representation formula \eqref{eq:scale_mixture_gaussian}.
\end{rem}

\section{Numerical experiments\label{sec:numerical}}

This section provides several numerical examples of isotropic kernels
from Table~\ref{tab:new_random_projections} along with their random
Fourier features approximation~\eqref{eq:random_fourier_features},
using the scale mixture representation of random projections~\eqref{eq:isotropic_random_projection}
established previously, and the simulation algorithms described in
the previous section. The analytical kernels involving special functions
(including the generalized Mat\'ern, Tricomi, and Fox $H$ kernels)
were evaluated numerically using methods from the Python packages
\code{scipy.special} and \code{mpmath}. Remark that a Fox $H$-function
can always be expressed in terms of a Meijer $G$-function (\code{meijerg}
function in \code{mpmath}) whenever all its parameters are rational
numbers \citep[equation~8.3.2-22]{prudnikov1990integrals}.

In the literature, numerical experiments involving random Fourier
features typically set the number of spectral simulations $M$ between
about a hundred to a few thousands \citep{rahimi2007random,avron2016quasi,li2021towards,chen2022kernel},
depending on the application, with some articles successfully using
as few as $M=20$ spectral simulations \citep{lazarogredilla2010sparse,rudi2017generalization,delbridge2020randomly}.
In this section, we deliberately set $M$ to the large value $M=d\times1000$
with $d\in\{1,2\}$ for visualization purposes, so as to greatly reduce
the inherent variability of RFF simulations, all the while keeping
the sampling error visible. 

Obviously, all the known theoretical results about the convergence
of the RFF approximation with respect to the number
of spectral points, such as \citep{rahimi2007random}, \citep{sriperumbudur2015optimal},
\citep{li2021towards} or \citep{yao2023error}, still apply to the
kernels discussed in this paper. This means that one can keep increasing
$M$ further to make the sampling error invisible to the naked eye.
In practice though, much smaller values of $M$ are likely to be sufficient,
as the end goal of kernel approximations in machine learning is not
to get a highly precise approximation of the kernel function in the
whole domain, but to obtain stable classification or regression predictions,
for a given dataset, from the kernel machine under consideration.

The two key takeaways of these simple numerical experiments are the following:
\begin{itemize}
\item When sampling directly from the spectral density is possible (such
as in the case of Gaussian or Mat\'ern kernels), our proposed spectral
scale mixture sampling~\eqref{eq:isotropic_random_projection} does
not make any difference to the RFF methodology. However, when the
spectral density is intractable, such as in the case of the spectral
densities reported in Appendix~\ref{sec:spectral_densities}, which
can only be expressed in terms of $H$-functions for general parameters,
our spectral scale mixture sampling formula makes the implementation
of RFF possible and straightforward. In other words, our work greatly
\emph{facilitates the implementation} of RFF for a very broad class
of kernel functions in $\Phi_{\infty}$.
\item We observe numerically, as expected, that the RFF approximations of
the kernels implemented below converge uniformly to the true kernel
functions when increasing the number of spectral points $M$. This
serves as a useful check of the analytical kernels formulas established
in Table~\ref{tab:new_random_projections}, in particular of their
scaling constants for general parameters.
\end{itemize}

\begin{figure}[H]
\begin{centering}
\includegraphics[width=1\textwidth]{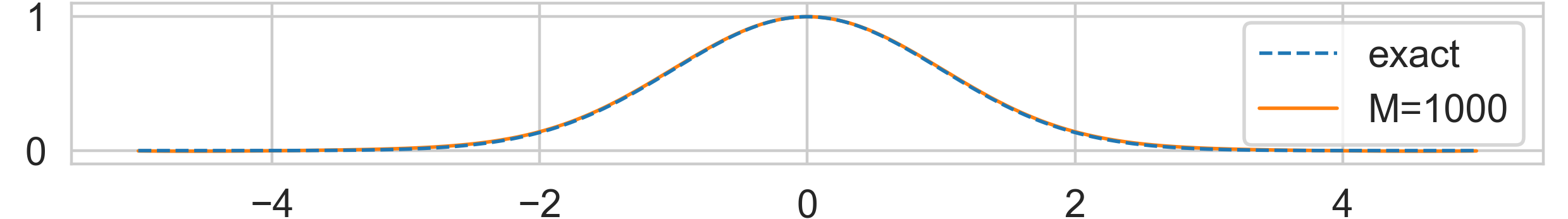}
\par\end{centering}
\caption{Univariate Gaussian kernel and its random Fourier features approximation
\eqref{eq:random_fourier_features} using $M=1000$ random projections.\label{fig:gaussian_1d}}
\end{figure}

\vspace{-4mm}
\begin{figure}[H]
\begin{minipage}[t]{0.48\columnwidth}%
\includegraphics[width=0.38\paperwidth]{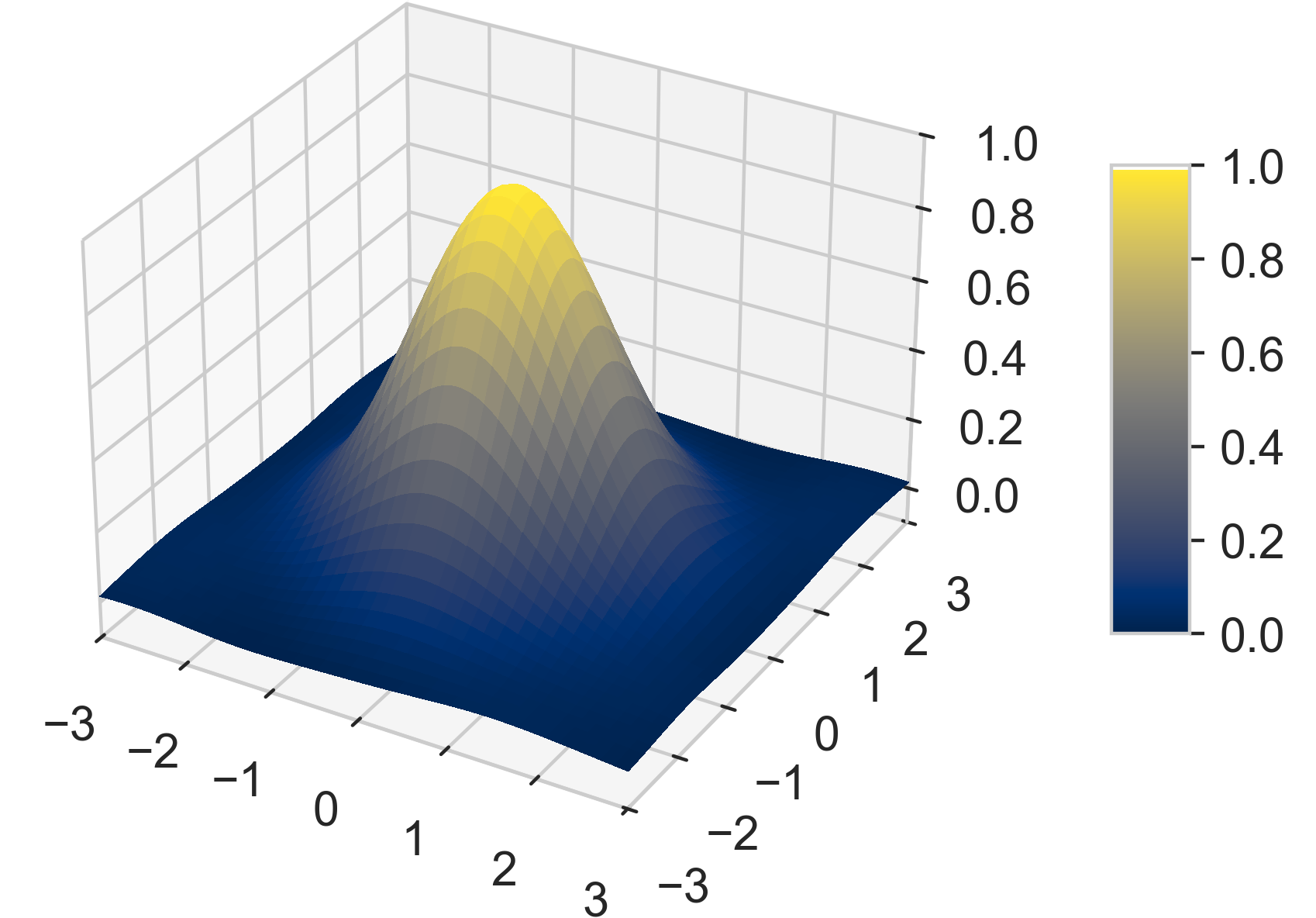}%
\end{minipage}\hfill{}%
\begin{minipage}[t]{0.48\columnwidth}%
\includegraphics[width=0.38\paperwidth]{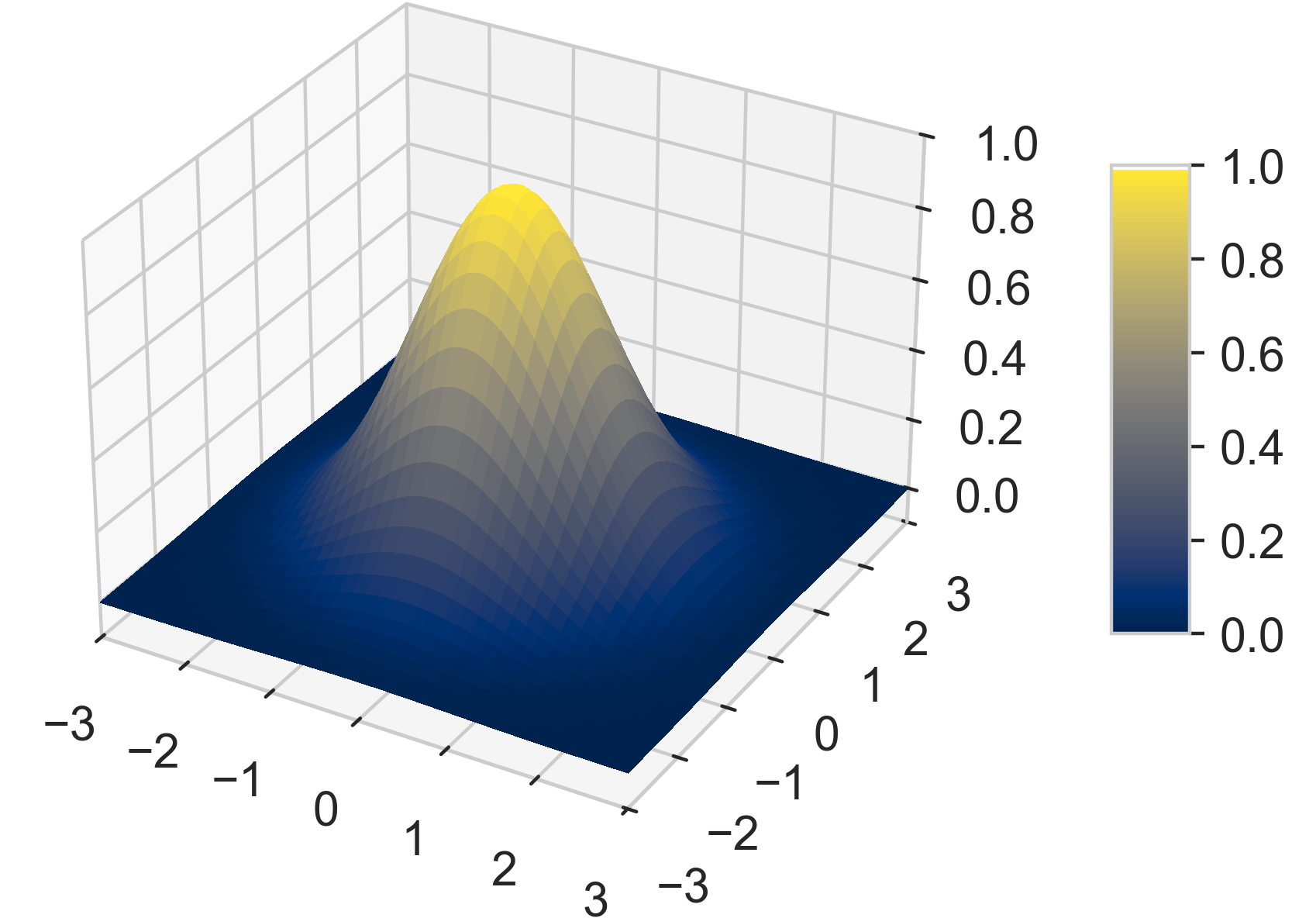}%
\end{minipage}

\caption{Bivariate Gaussian kernel and its random Fourier features approximation
\eqref{eq:random_fourier_features} (left) using $M=2000$ random
projections.\label{fig:gaussian_2d}}
\end{figure}

\begin{figure}[H]
\begin{centering}
\includegraphics[width=1\textwidth]{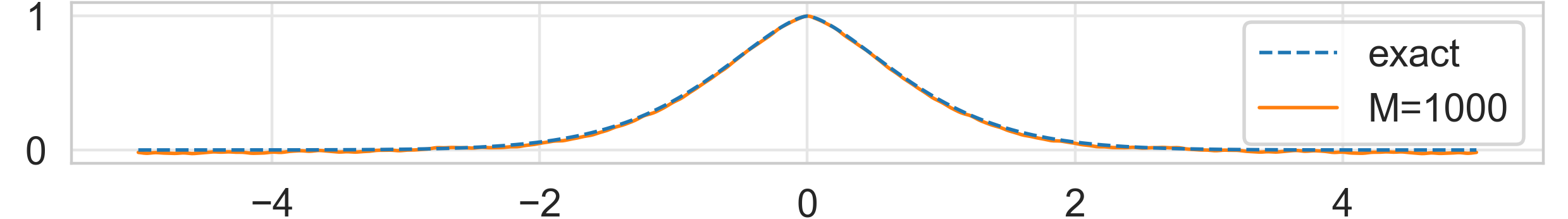}
\par\end{centering}
\caption{Univariate exponential power kernel with $\alpha=1.5$ and its random
Fourier features approximation \eqref{eq:random_fourier_features}
using $M=1000$ random projections.\label{fig:exponential_power_1d}}
\end{figure}

\vspace{-4mm}
\begin{figure}[H]
\begin{minipage}[t]{0.48\columnwidth}%
\includegraphics[width=0.38\paperwidth]{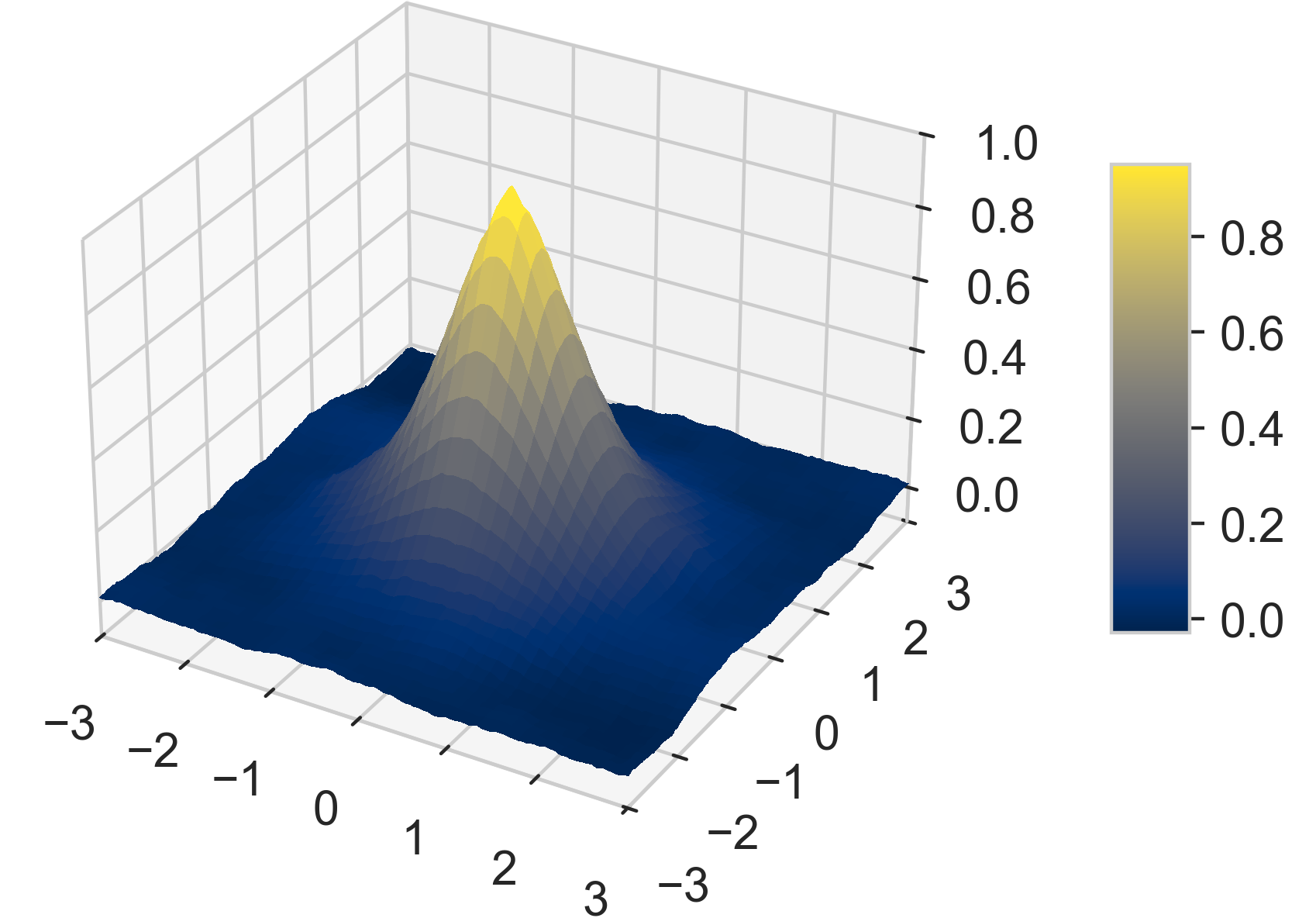}%
\end{minipage}\hfill{}%
\begin{minipage}[t]{0.48\columnwidth}%
\includegraphics[width=0.38\paperwidth]{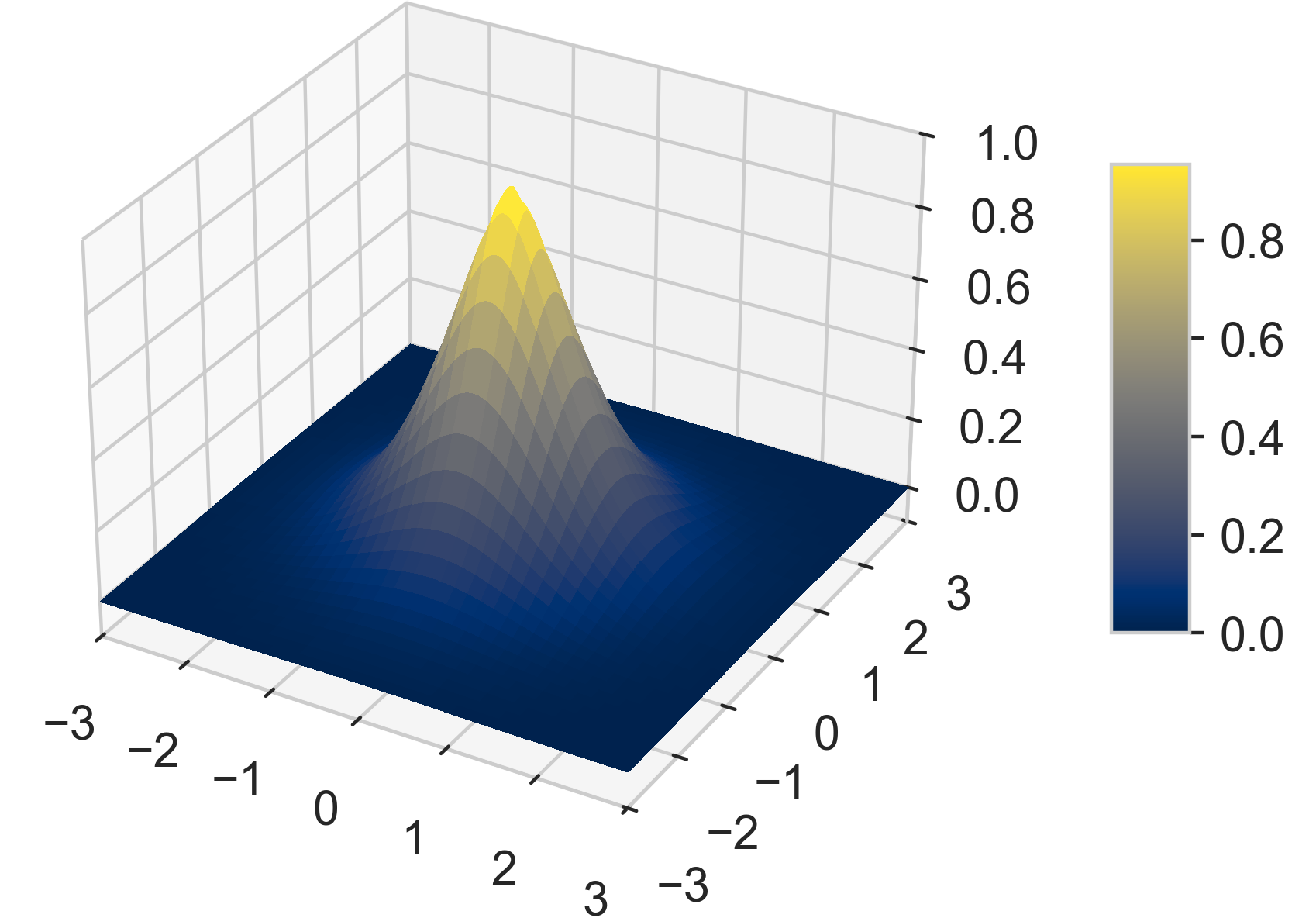}%
\end{minipage}

\caption{Bivariate exponential power kernel with $\alpha=1.5$ (right) and
its random Fourier features approximation \eqref{eq:random_fourier_features}
(left) using $M=2000$ random projections.\label{fig:exponential_power_2d}}
\end{figure}
\begin{figure}[H]
\begin{centering}
\includegraphics[width=1\textwidth]{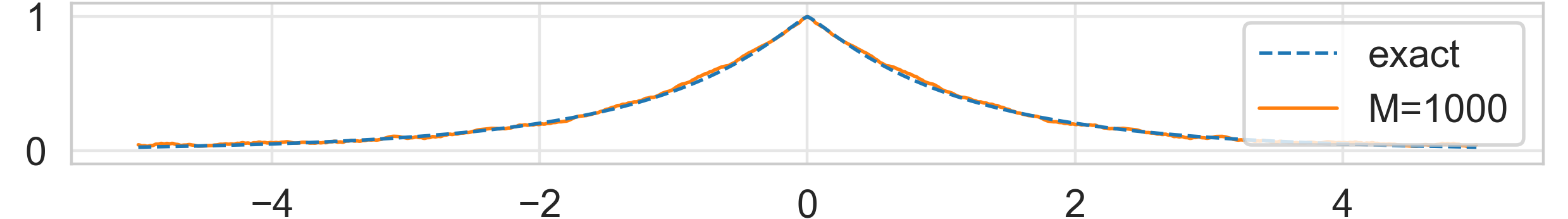}
\par\end{centering}
\caption{Univariate generalized Mat\'ern kernel with $(\alpha,\beta)=(1.5,1.0)$
and its random Fourier features approximation \eqref{eq:random_fourier_features}
using $M=1000$ random projections.\label{fig:generalized_matern_1d}}
\end{figure}

\vspace{-4mm}
\begin{figure}[H]
\begin{minipage}[t]{0.48\columnwidth}%
\includegraphics[width=0.38\paperwidth]{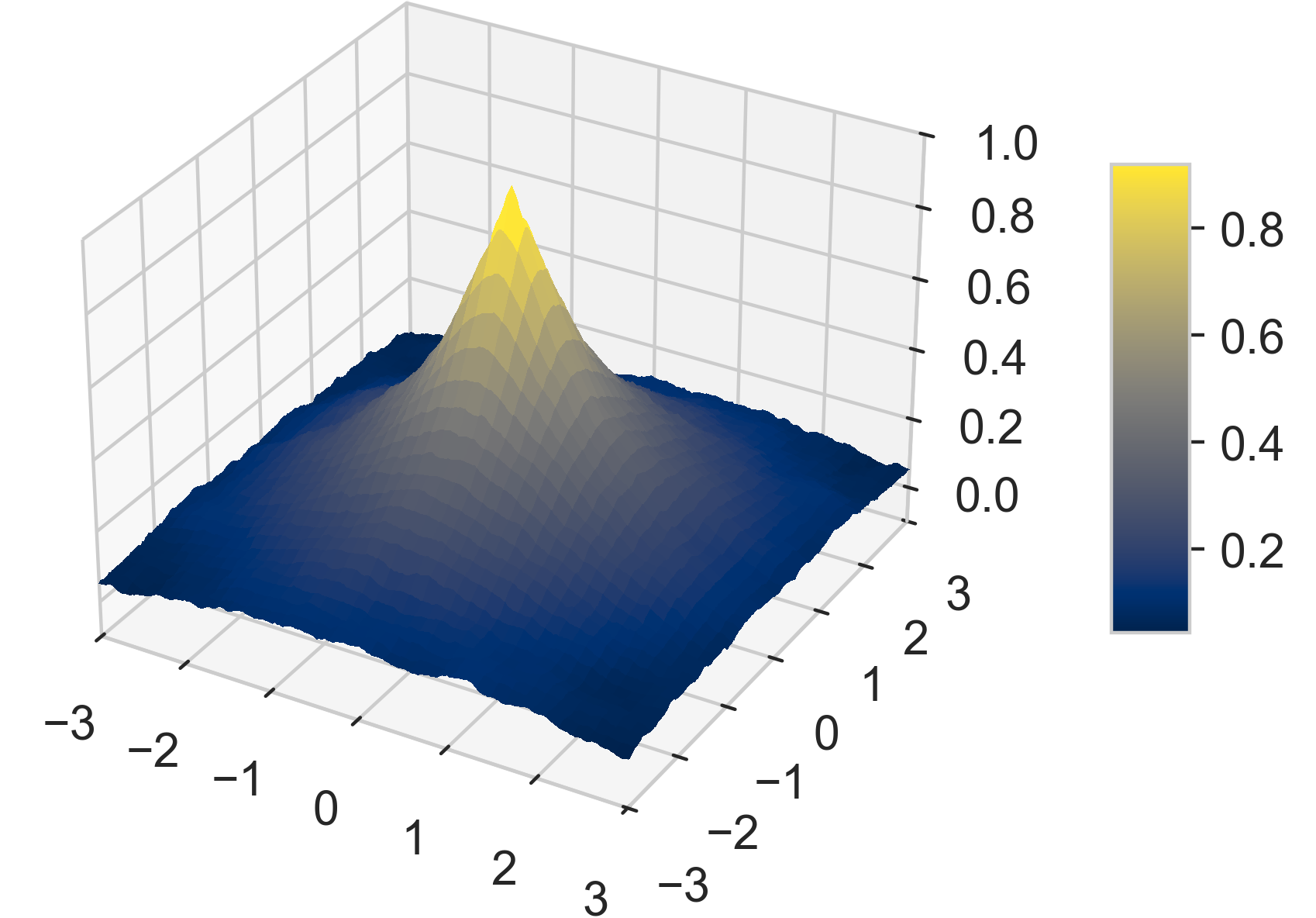}%
\end{minipage}\hfill{}%
\begin{minipage}[t]{0.48\columnwidth}%
\includegraphics[width=0.38\paperwidth]{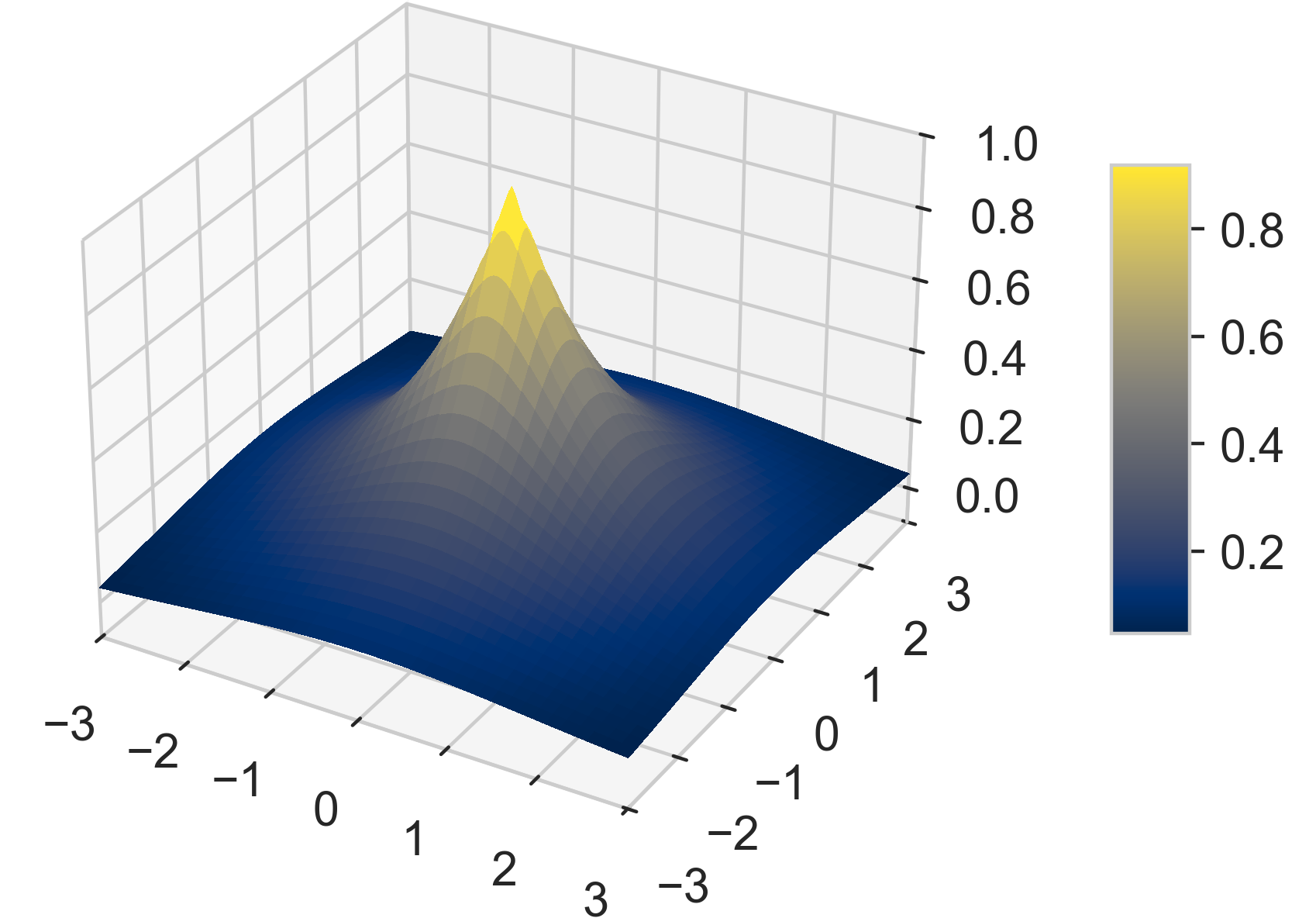}%
\end{minipage}

\caption{Bivariate generalized Mat\'ern kernel with $(\alpha,\beta)=(1.5,1.0)$
(right) and its random Fourier features approximation \eqref{eq:random_fourier_features}
(left) using $M=2000$ random projections.\label{fig:matern_32_2d}}
\end{figure}

\begin{figure}[H]
\begin{centering}
\includegraphics[width=1\textwidth]{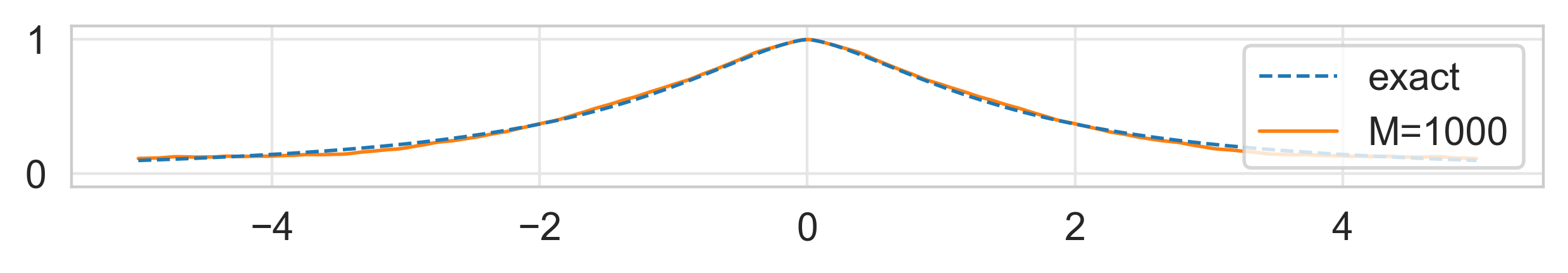}
\par\end{centering}
\caption{Univariate generalized Cauchy kernel with $(\alpha,\beta)=(1.5,1.5)$
and its random Fourier features approximation \eqref{eq:random_fourier_features}
using $M=1000$ random projections.\label{fig:generalized_cauchy_1d}}
\end{figure}

\vspace{-4mm}
\begin{figure}[H]
\begin{minipage}[t]{0.48\columnwidth}%
\includegraphics[width=0.38\paperwidth]{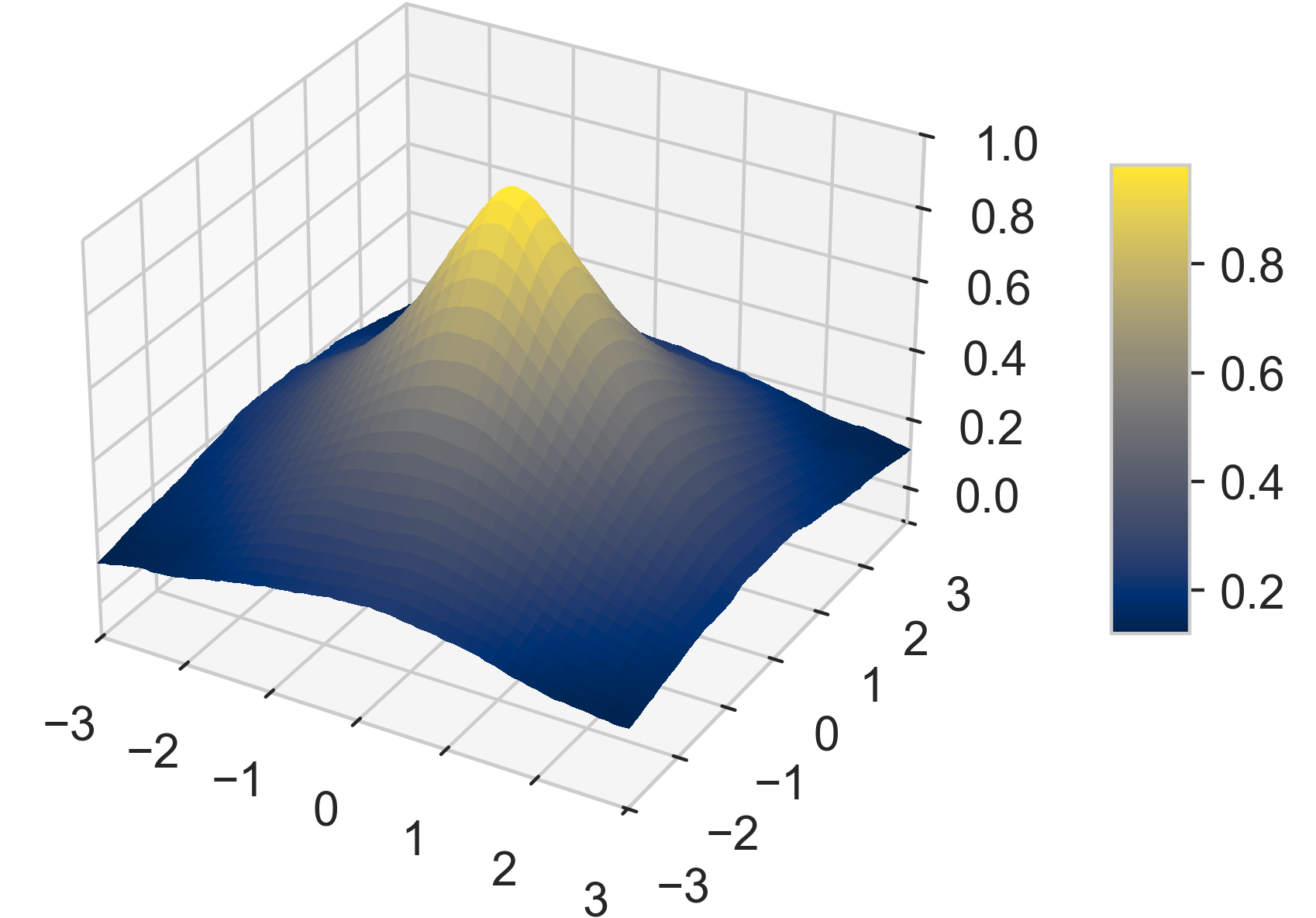}%
\end{minipage}\hfill{}%
\begin{minipage}[t]{0.48\columnwidth}%
\includegraphics[width=0.38\paperwidth]{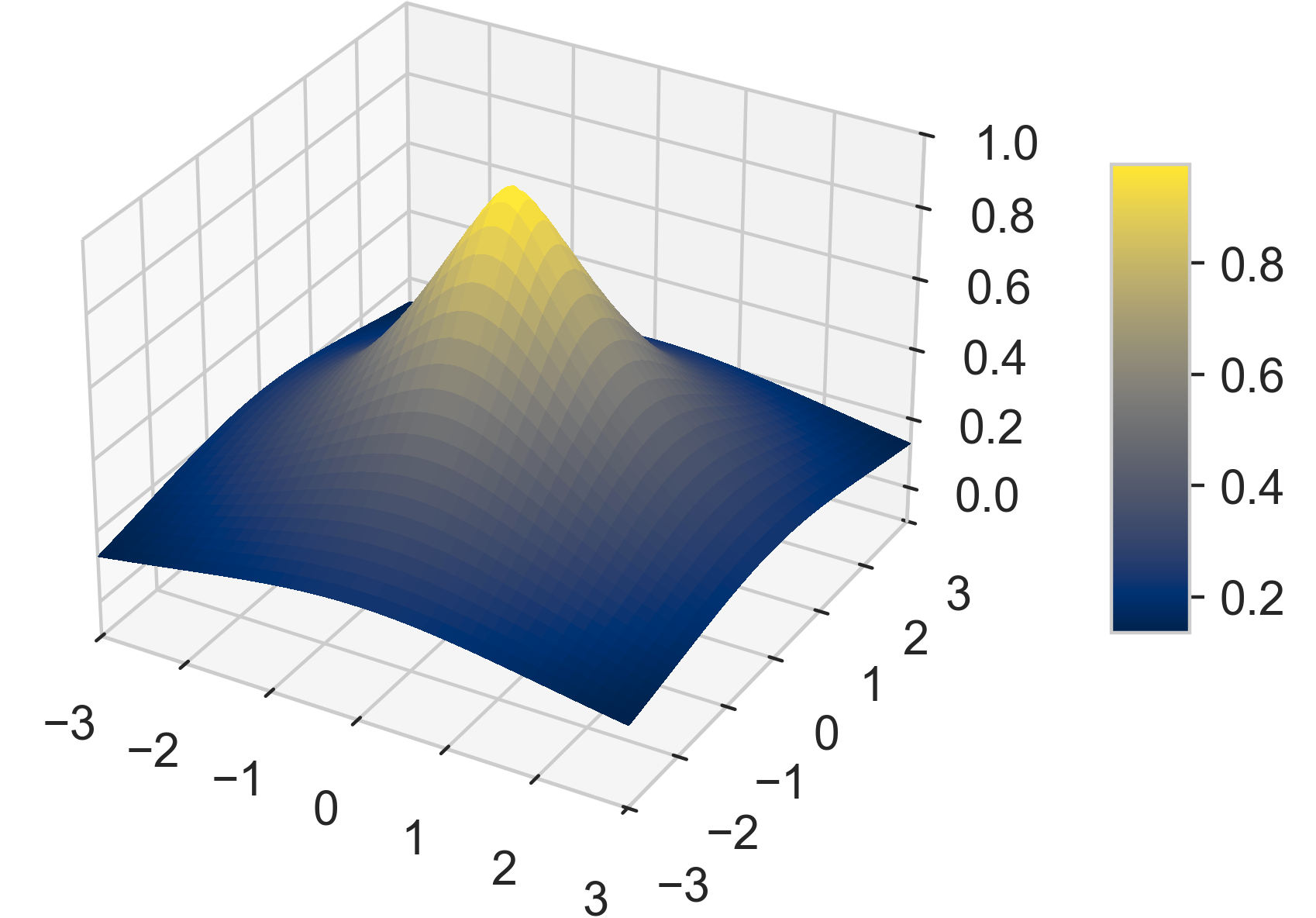}%
\end{minipage}

\caption{Bivariate generalized Cauchy kernel with $(\alpha,\beta)=(1.5,1.5)$
(right) and its random Fourier features approximation \eqref{eq:random_fourier_features}
(left) using $M=2000$ random projections. \label{fig:generalized_cauchy_2d}}
\end{figure}

\begin{figure}[H]
\begin{centering}
\includegraphics[width=1\textwidth]{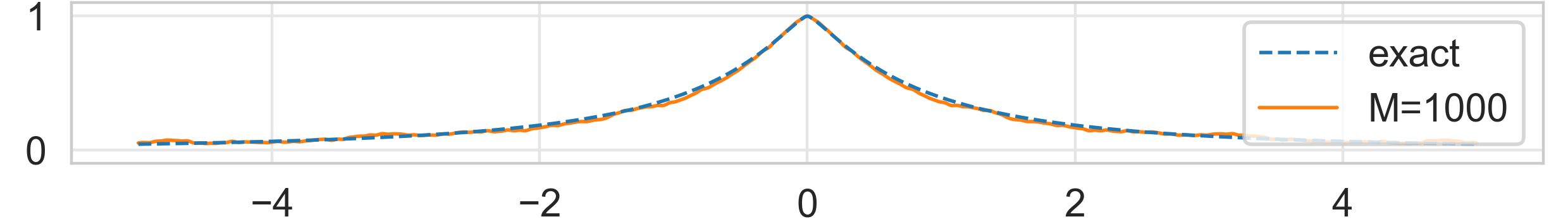}
\par\end{centering}
\caption{Univariate Tricomi kernel with $(\alpha,\beta,\gamma)=(1.5,1.5,1.5)$
and its random Fourier features approximation \eqref{eq:random_fourier_features}
using $M=1000$ random projections.\label{fig:tricomi_1d}}
\end{figure}

\vspace{-4mm}
\begin{figure}[H]
\begin{minipage}[t]{0.48\columnwidth}%
\includegraphics[width=0.38\paperwidth]{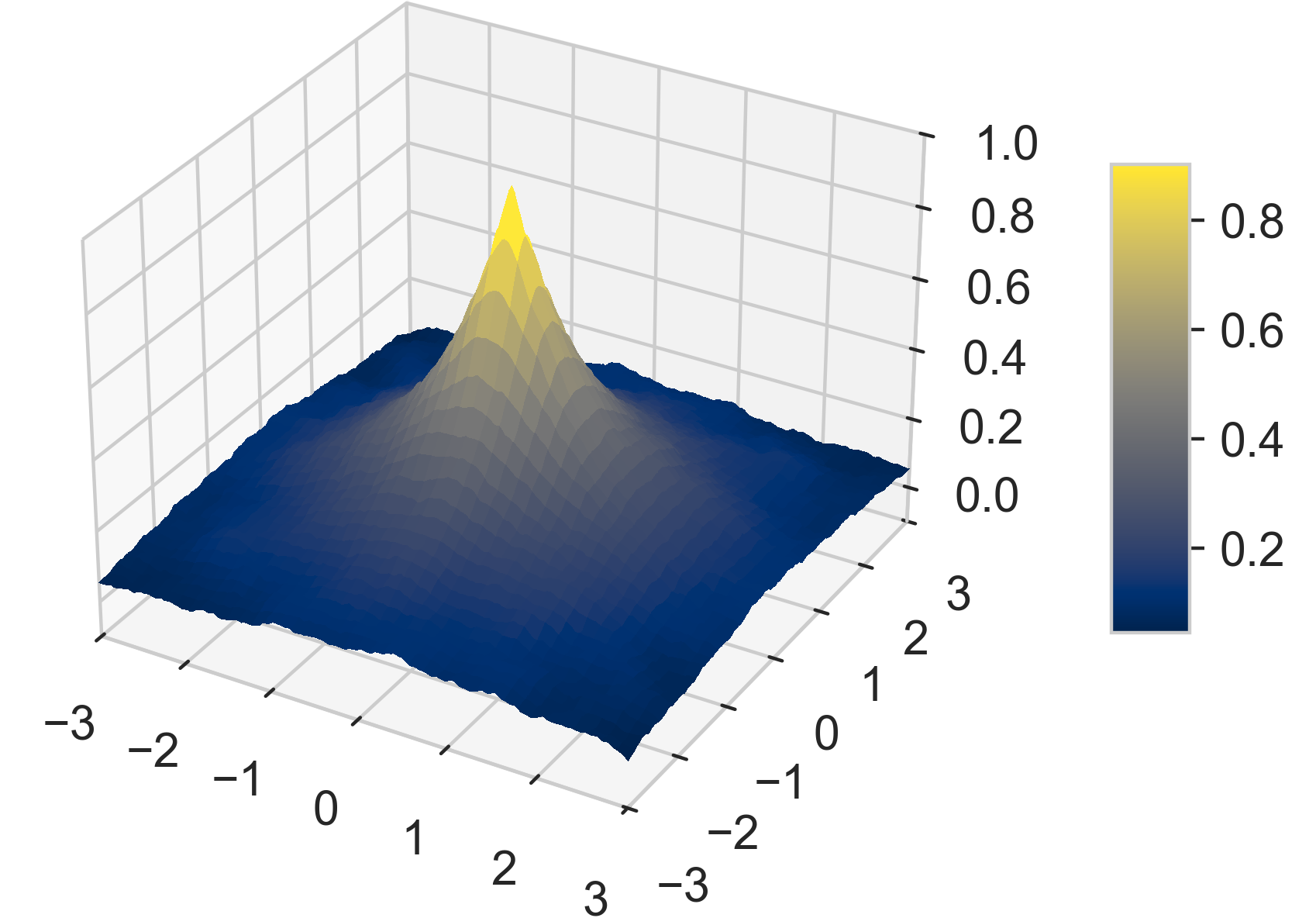}%
\end{minipage}\hfill{}%
\begin{minipage}[t]{0.48\columnwidth}%
\includegraphics[width=0.38\paperwidth]{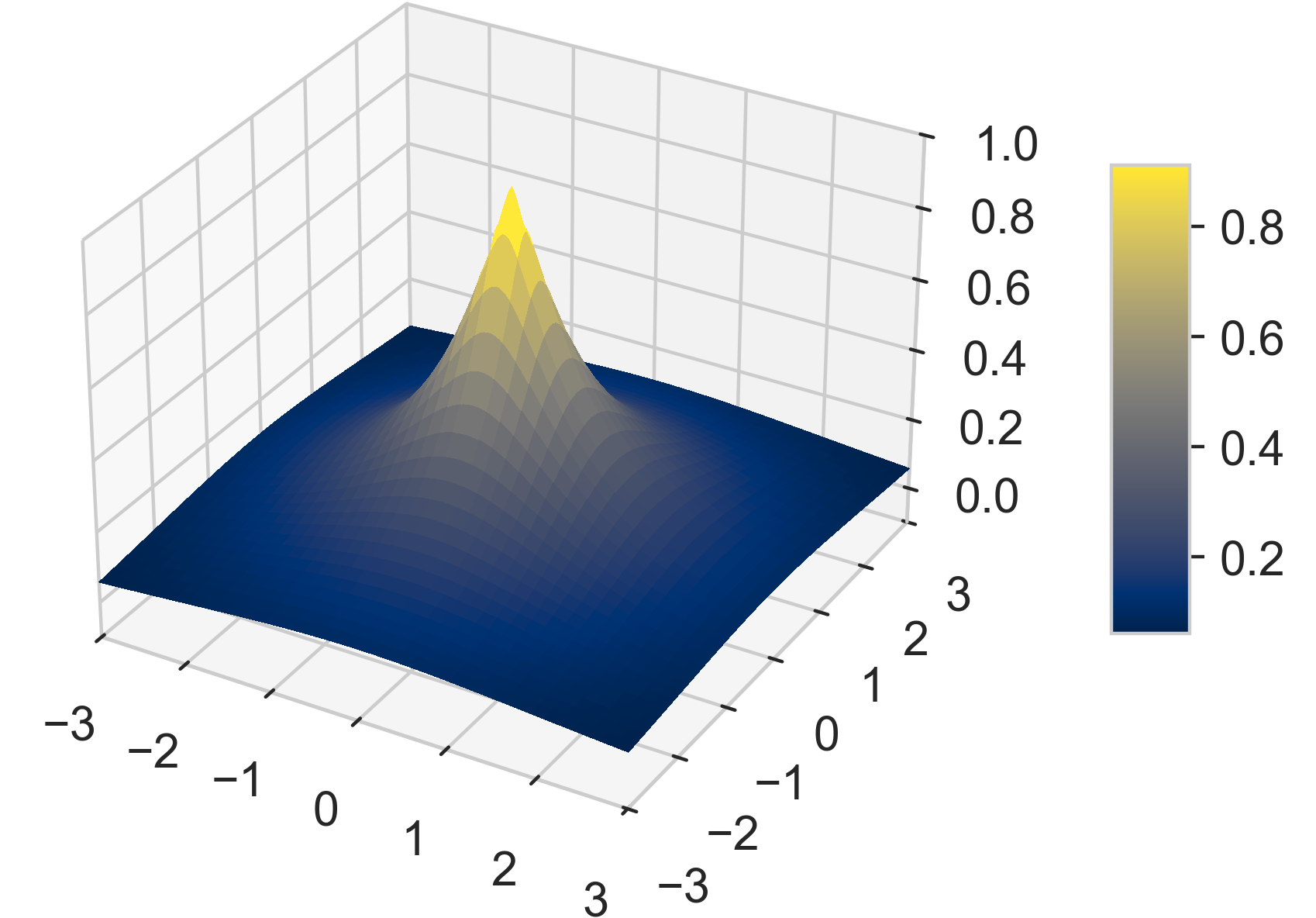}%
\end{minipage}

\caption{Bivariate Tricomi kernel with $(\alpha,\beta,\gamma)=(1.5,1.5,1.5)$
(right) and its random Fourier features approximation \eqref{eq:random_fourier_features}
(left) using $M=2000$ random projections. \label{fig:tricomi_2d}}
\end{figure}
\begin{figure}[H]
\begin{centering}
\includegraphics[width=1\textwidth]{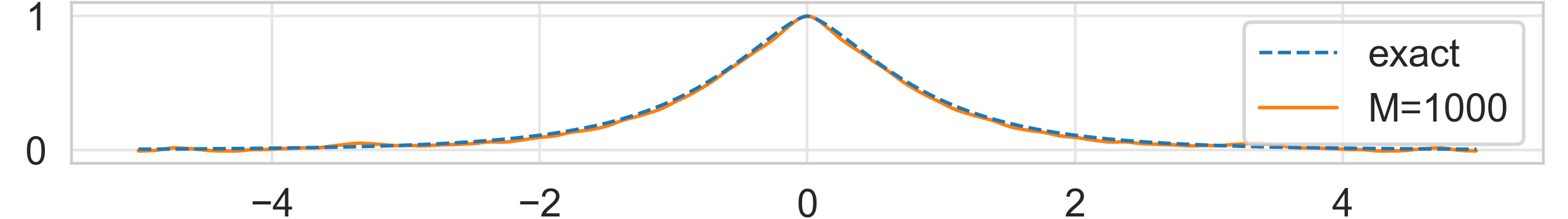}
\par\end{centering}
\caption{Univariate Fox $H_{1,2}^{2,1}$ kernel with $(\alpha,\beta,\gamma,\ell)=(1.5,1.5,1.5,2.0)$
and its random Fourier features approximation \eqref{eq:random_fourier_features}
using $M=1000$ random projections.\label{fig:foxh_1d}}
\end{figure}

\vspace{-4mm}
\begin{figure}[H]
\begin{minipage}[t]{0.48\columnwidth}%
\includegraphics[width=0.38\paperwidth]{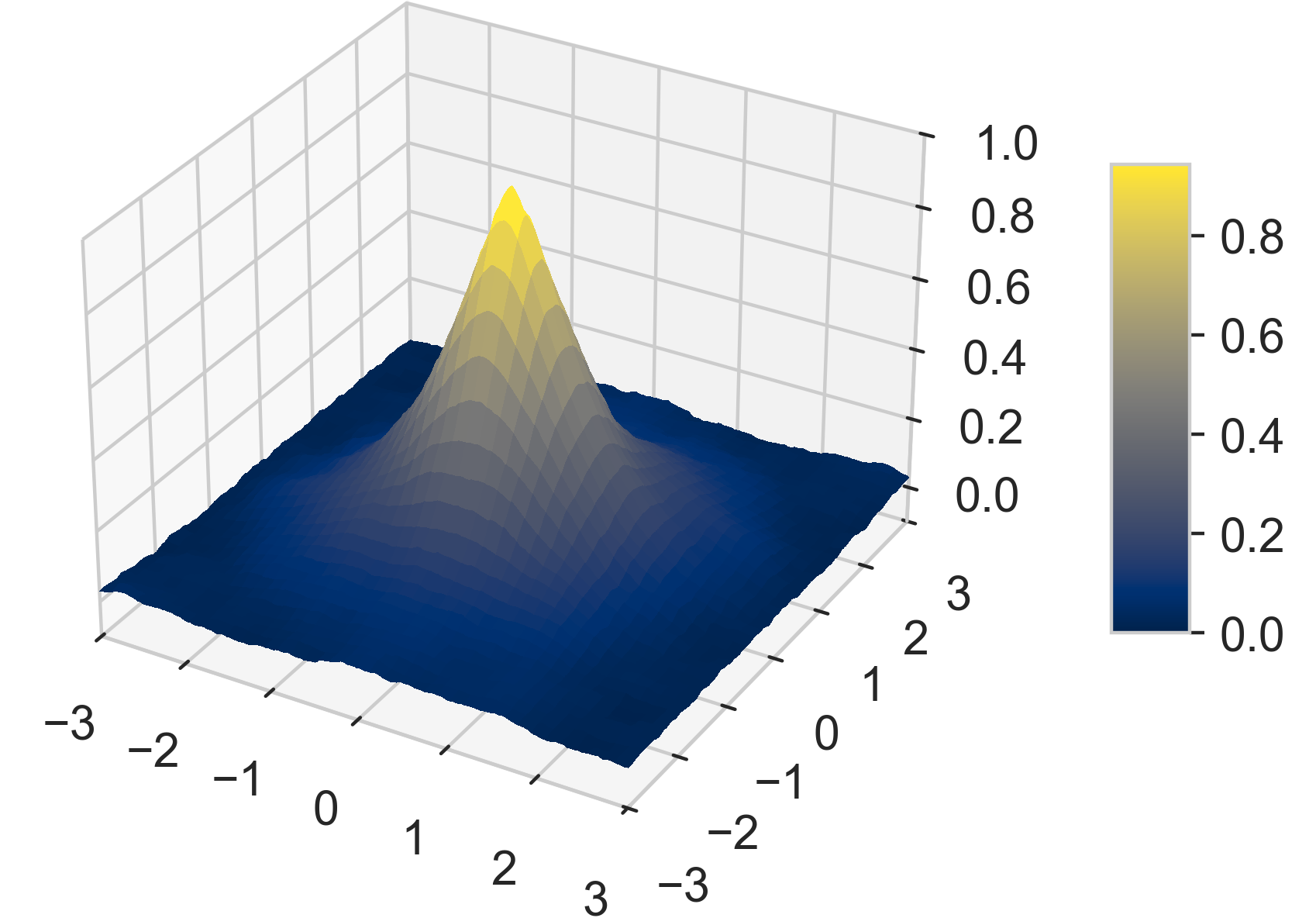}%
\end{minipage}\hfill{}%
\begin{minipage}[t]{0.48\columnwidth}%
\includegraphics[width=0.38\paperwidth]{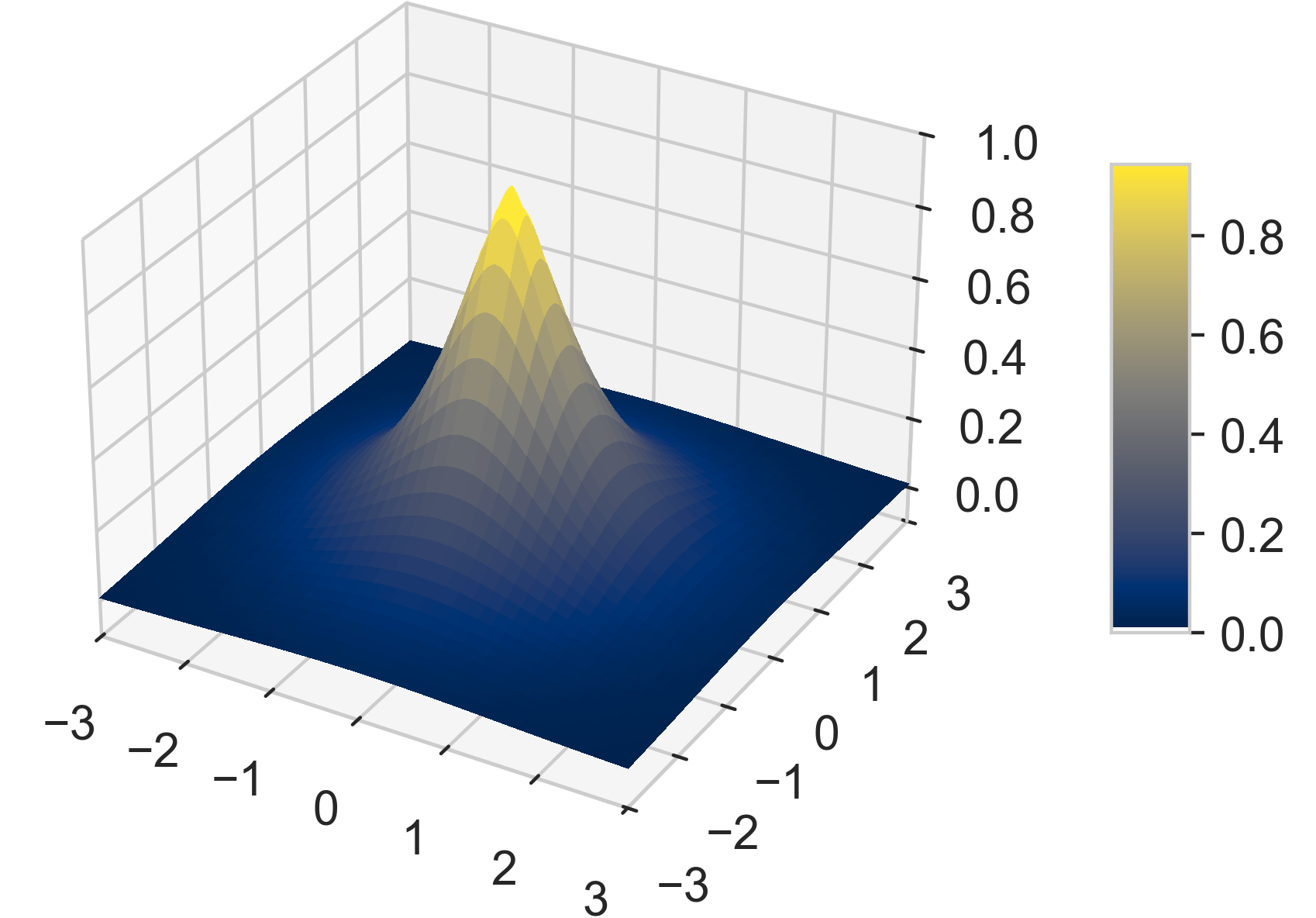}%
\end{minipage}

\caption{Bivariate Fox $H_{1,2}^{2,1}$ kernel with $(\alpha,\beta,\gamma,\ell)=(1.5,1.5,1.5,2.0)$
(right) and its random Fourier features approximation \eqref{eq:random_fourier_features}
(left) using $M=2000$ random projections. \label{fig:foxh_2d}}
\end{figure}

\section{Conclusion\label{sec:conclusion}}

In this paper, we proved that if an isotropic kernel $K(\mathbf{u})=k(\left\Vert \mathbf{u}\right\Vert ^{2})$,
$\mathbf{u}\in\mathbb{R}^{d}$ belongs to the set $\Phi_{\infty}$
of positive definite kernels in $\mathbb{R}^{d}$ for all $d\geq1$,
then the same is true for all the kernels $K_{\alpha}(\mathbf{u})=k(\left\Vert \mathbf{u}\right\Vert ^{\alpha})$,
$\mathbf{u}\in\mathbb{R}^{d}$, $\alpha\in(0,2]$. This result can
be used to generalize and create new kernels in $\Phi_{\infty}$.
Examples include the generalized Mat\'ern kernel and the Tricomi
kernel, which generalize the classical Mat\'ern and confluent hypergeometric
kernels, respectively. We established analytical expressions for the
spectral densities of these generalized kernels in terms of the Fox
$H$ generalized hypergeometric function.

Moreover, we proved that the spectral distribution of these generalized
kernels $K_{\alpha}(\mathbf{u})=k(\left\Vert \mathbf{u}\right\Vert ^{\alpha})$
admits the mixture representation $\boldsymbol{\eta}_{\alpha}=R^{\frac{1}{\alpha}}\boldsymbol{S}_{\alpha}$,
where $\boldsymbol{S}_{\alpha}$ is a symmetric stable random vector,
and $R$ is a nonnegative random variable, independent of $\boldsymbol{S}_{\alpha}$,
whose distribution is defined by the inverse Laplace transform of
$k/k(0)$. This result has direct applications for the random Fourier
features methodology in machine learning, which is a kernel decomposition
technique based on the Monte Carlo simulation of spectral densities.
Indeed, while sampling directly from a Fox $H$ density would be a
great challenge in practice, simulating the scale mixture $R^{\frac{1}{\alpha}}\boldsymbol{S}_{\alpha}$
is very simple since $\boldsymbol{S}_{\alpha}$ can be sampled using
a simple extension of the Box-Muller formula, and the random variable
$R$ is straightforward to simulate for most kernel functions of interest.
Our numerical experiments confirm the exactness of the proposed method.

We believe that further developments ought to be achievable at this
intersection between the theories of spectral distributions and stable
distributions.  One idea for future work could be to study analytically the effect
of the newly introduced parameter $\alpha$ on the convergence of
the random Fourier features approximation. Another idea could be to
aim to extend this spectral scale mixture representation to more general
classes of kernels, for example non-stationary kernels, with useful
implications for kernel-based machine learning.

\section*{Acknowledgements}

The authors would like to thank Dr. Christian Walder (Google DeepMind) for fruitful discussions,  and Prof.~Emilio Porcu (Khalifa University) for letting us know about an error in a previous version of this work.  Nicolas Langren\'e acknowledges the partial support of the Guangdong Provincial/Zhuhai Key Laboratory of IRADS (2022B1212010006) and the BNBU Start-up Research Fund UICR0700041-22. Pierre Gruet and Xavier Warin acknowledge support from the FiME Lab.

\bibliographystyle{apalike}
\bibliography{biblio}

\appendix

\section{Analytical characteristic functions\label{sec:characteristic_functions}}

This appendix provides analytical formulas for the Laplace transforms
and the characteristic functions of the densities of two particular non-negative
random variables, namely the Stacy distribution \citep[equation~(11.2)]{crooks2019field}
(Proposition~\ref{prop:laplace_stacy}), a.k.a. Amoroso distribution,
obtained as the power $G_{\beta}^{\frac{1}{\ell}}$ of a Gamma random
variable, and the generalized beta prime distribution \citep[equation~(18.1)]{crooks2019field}
(Proposition~\ref{prop:laplace_generalized_beta_prime}), obtained
as the power $(G_{\beta}/G_{\gamma})^{\frac{1}{\ell}}$ of the ratio
of two independent Gamma random variables. These formulas are expressed
in terms of the Fox $H$-function $H_{p,q}^{m,n}$ \citep{mathai2010hfunction},
and we used them to create the Fox $H$-kernels listed in Table~\ref{tab:new_random_projections}. 
\begin{prop}
\label{prop:laplace_stacy}The Laplace transform $\mathcal{L}$ and
the characteristic function $\phi$ of the Stacy random variable, whose density is defined by
\[
f(x)=\frac{\ell}{\Gamma(\beta)}x^{\ell\beta-1}e^{-x^{\ell}}\mathbbm{1}_{\{x>0\}}
\]
where $\beta>0$, $\ell>0$, are given by
\begin{align*}
\mathcal{L}(s) & =\frac{1}{\Gamma(\beta)}H_{1,1}^{1,1}\Big(s\,\Big|\begin{array}{l}
{\scriptstyle (1-\beta,\frac{1}{\ell})}\\
{\scriptstyle (0,1)}
\end{array}\Big),\\
\phi(s) & =\frac{1}{\Gamma(\beta)}H_{1,1}^{1,1}\Big(-is\,\Big|\begin{array}{l}
{\scriptstyle (1-\beta,\frac{1}{\ell})}\\
{\scriptstyle (0,1)}
\end{array}\Big).
\end{align*}
\end{prop}
\begin{proof}
Use the formula $e^{-s}=G_{0,1}^{1,0}\big(s\big|0\big)$, then \citep[equation~8.3.1-21]{prudnikov1990integrals}
and \citep[equation~2.25.1]{prudnikov1990integrals}. From \citep[Theorem~1.1 case~7]{mathai2010hfunction},
this $H$-function exists and is analytical for all $\ell>0$ and
$s\neq0$, and the case $s=0$ can be verified separately. Then, since the Stacy distribution is a nonnegative distribution,
the equality $\phi(s)=\mathcal{\mathcal{L}}(-is)$ holds for every
$s\geq0$.
\end{proof}
Proposition~\ref{prop:laplace_stacy} generalizes the Laplace transform
formula \citep[equation~(3.462-1)]{gradshteyn2014table} corresponding
to the case $\ell=2$, and simplifies the formula \citep[equation~(2.2.1-22)]{prudnikov1992integrals}
which requires $\ell$ to be a rational number.

\begin{prop}
\label{prop:laplace_generalized_beta_prime}The Laplace transform
$\mathcal{L}$ and the characteristic function $\phi$ of the generalized beta prime random variable, whose density is defined by
\[
f(x)=\ell\frac{\Gamma(\beta+\gamma)}{\Gamma(\beta)\Gamma(\gamma)}x^{\ell\beta-1}(1+x^{\ell})^{-\beta-\gamma}\mathbbm{1}_{\{x>0\}}
\]
where $\beta>0$, $\gamma>0$, $\ell>0$, are given by
\begin{align*}
\mathcal{L}(s) & =\frac{1}{\Gamma(\beta)\Gamma(\gamma)}H_{1,2}^{2,1}\Big(s\,\Big|\begin{array}{l}
{\scriptstyle (1-\beta,\frac{1}{\ell})}\\
{\scriptstyle (0,1),(\gamma,\frac{1}{\ell})}
\end{array}\Big),\\
\phi(s) & =\frac{1}{\Gamma(\beta)\Gamma(\gamma)}H_{1,2}^{2,1}\Big(-is\,\Big|\begin{array}{l}
{\scriptstyle (1-\beta,\frac{1}{\ell})}\\
{\scriptstyle (0,1),(\gamma,\frac{1}{\ell})}
\end{array}\Big).
\end{align*}
\end{prop}
\begin{proof}
Use the formulas $e^{-s}=G_{0,1}^{1,0}\big(s\big|0\big)$ and $\left(1+t\right)^{-\beta-\gamma}=\frac{1}{\Gamma(\beta+\gamma)}G_{1,1}^{1,1}\Big(t\Big|\begin{array}{l}
{\scriptstyle 1-\beta-\gamma}\\
{\scriptstyle 0}
\end{array}\Big)$, then \citep[equation~8.3.1-21]{prudnikov1990integrals} and \citep[equation~2.25.1]{prudnikov1990integrals}.
From \citep[Theorem~1.1 case~1]{mathai2010hfunction}, this $H$-function
exists and is analytical for all $\ell>0$ and $s\neq0$, and the case $s=0$ can be verified separately. Then, since
the generalized beta prime distribution is a nonnegative distribution,
the equality $\phi(s)=\mathcal{\mathcal{L}}(-is)$ holds for every
$s\geq0$.
\end{proof}
Proposition~\ref{prop:laplace_generalized_beta_prime} generalizes
the Laplace transform formula \citep[equation~(3.389-2)]{gradshteyn2014table}
corresponding to the case $\ell=2$, and simplifies the formula \citep[equation~(2.1.4-8)]{prudnikov1992integrals}
which requires $\ell$ to be a rational number.

\section{Analytical spectral densities\label{sec:spectral_densities}}

This appendix provides analytical formulas for the spectral densities
of several kernels from Table~\ref{tab:new_random_projections}. Those not listed here can be addressed in the exact same manner. These
densities are expressed in terms of the Fox $H$-function $H_{p,q}^{m,n}$
\citep{mathai2010hfunction}. Proofs in this appendix make use of
the Meijer $G$-function $G_{p,q}^{m,n}\Big(z\Big|\begin{array}{l}
{\scriptstyle a_{1},\ldots,a_{p}}\\
{\scriptstyle b_{1},\ldots,b_{q}}
\end{array}\Big)$ \citep[16.17]{dlmf}. We refer to \citep{prudnikov1990integrals}
and https://functions.wolfram.com for properties of the $G$-function
and its connections with other classical and special functions. We
make use of the following Lemma.
\begin{lem}
\label{lem:fourier_meijer}Let $K(\mathbf{u})=k(\left\Vert \mathbf{u}\right\Vert ),\ \mathbf{u}\in\mathbb{R}^{d}$
be a continuous, absolutely integrable kernel function, and let $\alpha>0$.
Then, the Fourier transform of $K$ is given by
\begin{equation}
\mathcal{F}(\mathbf{x})=f(\left\Vert \mathbf{x}\right\Vert )=\frac{1}{\alpha2^{d-1}\pi^{\frac{d}{2}}}\int_{0}^{\infty}\!s^{\frac{d}{\alpha}-1}k(s^{\frac{1}{\alpha}})G_{0,2}^{1,0}\bigg(\frac{\left\Vert \mathbf{x}\right\Vert ^{2}}{4}s^{\frac{2}{\alpha}}\bigg|0,1-\frac{d}{2}\bigg)ds\ ,\ \mathbf{x}\in\mathbb{R}^{d}\label{eq:fourier_meijer}
\end{equation}
\end{lem}
\begin{proof}
Since $K$ is continuous and absolutely integrable, its Fourier transform
is given, using our notational convention \eqref{eq:K_wrt_f}-\eqref{eq:f_wrt_K}
for Fourier transforms, by \citep[Theorem~B.1]{fasshauer2007meshfree}
\begin{align*}
f(\left\Vert \mathbf{x}\right\Vert ) & =\frac{1}{(2\pi)^{\frac{d}{2}}\left\Vert \mathbf{x}\right\Vert ^{\frac{d}{2}-1}}\int_{0}^{\infty}t^{\frac{d}{2}}k(t)J_{\frac{d}{2}-1}(\left\Vert \mathbf{x}\right\Vert t)dt\\
 & =\frac{1}{2^{d-1}\pi^{\frac{d}{2}}\Gamma(d/2)}\int_{0}^{\infty}t^{d-1}k(t){}_{0}F_{1}\bigg(;\frac{d}{2},-\frac{\left\Vert \mathbf{x}\right\Vert ^{2}}{4}t^{2}\bigg)dt\\
 & =\frac{1}{\alpha2^{d-1}\pi^{\frac{d}{2}}}\int_{0}^{\infty}s^{\frac{d}{\alpha}-1}k(s^{\frac{1}{\alpha}})G_{0,2}^{1,0}\bigg(\frac{\left\Vert \mathbf{x}\right\Vert ^{2}}{4}s^{\frac{2}{\alpha}}\bigg|0,1-\frac{d}{2}\bigg)ds
\end{align*}
where we used the special functions formulas $J_{\nu}(z)=\frac{1}{\Gamma(\nu+1)}\left(\frac{z}{2}\right)^{\nu}{}_{0}F_{1}\Big(;\nu+1,-\frac{z^{2}}{4}\Big)$
and $_{0}F_{1}(;b,z)=\Gamma(b)G_{0,2}^{1,0}\big(-z\big|0,1-b\big)$.
\end{proof}
Using Lemma~\ref{lem:fourier_meijer}, one can obtain analytical
expressions for Fourier transforms of isotropic kernels by expressing
$k$ in terms of $G$-function, converting the two $G$-functions
in \eqref{eq:fourier_meijer} to $H$-functions using \citep[equation~8.3.1-21]{prudnikov1990integrals},
and finally integrating analytically into a $H$-function using \citep[equation~2.25.1]{prudnikov1990integrals}.
This is the approach used throughout this appendix.

\subsection{Exponential power kernel\label{subsec:exponential_power_kernel}}

The density of $\alpha$-stable random variables (univariate, possibly
non-symmetric) can be expressed in terms of the Fox $H_{2,2}^{1,1}$-function
\citep{schneider1986stable,rathie2016exact}. The following
proposition provides a formula for symmetric, multivariate isotropic
$\alpha$-stable densities.
\begin{prop}
\label{prop:fourier_exponential_power}The Fourier transform of the
exponential power kernel 
\begin{equation}
K(\mathbf{u})=k(\left\Vert \mathbf{u}\right\Vert )=e^{-\left\Vert \mathbf{u}\right\Vert ^{\alpha}}\ ,\ \mathbf{u}\in\mathbb{R}^{d}\label{eq:exponential_power_kernel}
\end{equation}
is given for $\alpha>0$ and $\mathbf{x}\in\mathbb{R}^{d}$ by
\begin{equation}
\mathcal{F}(\mathbf{x})=f(\left\Vert \mathbf{x}\right\Vert )=\frac{1}{\alpha2^{d-1}\pi^{\frac{d}{2}}}H_{1,2}^{1,1}\bigg(\frac{\left\Vert \mathbf{x}\right\Vert ^{2}}{4}\bigg|\begin{array}{l}
(1-\frac{d}{\alpha},\frac{2}{\alpha})\\
(0,1),(1-\frac{d}{2},1)
\end{array}\bigg).\label{eq:fourier_exponential_power}
\end{equation}
\end{prop}
\begin{proof}
Use equation~\eqref{eq:fourier_meijer}, the formula $e^{-s}=G_{0,1}^{1,0}\big(s\big|0\big)$,
\citep[equation~8.3.1-21]{prudnikov1990integrals} and \citep[equation~2.25.1]{prudnikov1990integrals}.
From \citep[Theorem~1.1 case~7]{mathai2010hfunction}, this $H$-function
exists and is analytical for all $\alpha>0$ and $\mathbf{x}\neq\mathbf{0}$, and the case $\mathbf{x}=\mathbf{0}$ can be verified separately.
\end{proof}
\begin{rem}
In the case $\alpha=2$, one retrieves, as expected, a multivariate
Gaussian distribution
\begin{align*}
f(\left\Vert \mathbf{x}\right\Vert ) & =\frac{1}{2^{d}\pi^{\frac{d}{2}}}H_{1,2}^{1,1}\bigg(\frac{\left\Vert \mathbf{x}\right\Vert ^{2}}{4}\bigg|\begin{array}{l}
(1-\frac{d}{2},1)\\
(0,1),(1-\frac{d}{2},1)
\end{array}\bigg)=\frac{1}{2^{d}\pi^{\frac{d}{2}}}G_{1,2}^{1,1}\bigg(\frac{\left\Vert \mathbf{x}\right\Vert ^{2}}{4}\bigg|\begin{array}{l}
1-\frac{d}{2}\\
0,1-\frac{d}{2}
\end{array}\bigg)\\
 & =\frac{1}{2^{d}\pi^{\frac{d}{2}}}e^{-\frac{\left\Vert \mathbf{x}\right\Vert ^{2}}{4}}=\frac{1}{\left(2\pi\sigma^{2}\right)^{\frac{d}{2}}}e^{-\frac{\left\Vert \mathbf{x}\right\Vert ^{2}}{2\sigma^{2}}}
\end{align*}
where each independent Gaussian component has variance $\sigma^{2}=2$.
\end{rem}

\subsection{Generalized Cauchy kernel\label{subsec:generalized_cauchy_kernel}}

A formula for the spectral density of the generalized Cauchy kernel
is provided in \citep[Proposition~4.2]{lim2010analytic} (see also
\citep[Theorem~2.1]{faouzi2020zastavnyi}) as a weighted sum of two
Fox-Wright $\Psi_{2,1}$ generalized hypergeometric functions. The
following proposition provides an equivalent but more concise formula
in terms of the Fox $H$-function.
\begin{prop}
\label{prop:fourier_generalized_cauchy}The Fourier transform of the
generalized Cauchy kernel 
\begin{equation}
K(\mathbf{u})=k(\left\Vert \mathbf{u}\right\Vert )=\frac{1}{\left(1+\lambda\left\Vert \mathbf{u}\right\Vert ^{\alpha}\right)^{\beta}}\ ,\ \mathbf{u}\in\mathbb{R}^{d}\label{eq:generalized_cauchy_kernel}
\end{equation}
where $\alpha>0$, $\beta>0$, $\lambda>0$, is given for $\mathbf{x}\in\mathbb{R}^{d}$ by
\begin{align}
\mathcal{F}(\mathbf{x})=f(\left\Vert \mathbf{x}\right\Vert ) & =\frac{1}{\alpha2^{d-1}\pi^{\frac{d}{2}}\lambda^{\frac{d}{\alpha}}\Gamma(\beta)}H_{1,3}^{2,1}\bigg(\frac{\left\Vert \mathbf{x}\right\Vert ^{2}}{4\lambda^{\frac{2}{\alpha}}}\bigg|\begin{array}{l}
(1-\frac{d}{\alpha},\frac{2}{\alpha})\\
(0,1),(\beta-\frac{d}{\alpha},\frac{2}{\alpha})(1-\frac{d}{2},1)
\end{array}\bigg).\label{eq:fourier_generalized_cauchy}
\end{align}
The case $\lambda=1$ corresponds to the classical generalized Cauchy
parameterization \citep{gneiting2004stochastic}, while the case $\lambda=\frac{1}{2\beta}$
corresponds to the one proposed in Table~\ref{tab:new_random_projections}.
\end{prop}
\begin{proof}
Use equation~\eqref{eq:fourier_meijer}, the formula $\left(1+\lambda s\right)^{-\beta}=\frac{1}{\Gamma(\beta)}G_{1,1}^{1,1}\Big(\lambda s\Big|\begin{array}{l}
{\scriptstyle 1-\beta}\\
{\scriptstyle 0}
\end{array}\Big)$, \citep[equation~8.3.1-21]{prudnikov1990integrals} and \citep[equation~2.25.1]{prudnikov1990integrals}.
From \citep[Theorem~1.1 case~1]{mathai2010hfunction}, this $H$-function
exists and is analytical for all $\alpha>0$ and $\mathbf{x}\neq\mathbf{0}$, and the case $\mathbf{x}=\mathbf{0}$ can be verified separately.
\end{proof}

\subsection{Generalized Mat\'ern kernel\label{subsec:generalized_matern_kernel}}
\begin{prop}
\label{prop:fourier_generalized_matern}The Fourier transform of the
generalized Mat\'ern kernel 
\begin{equation}
K(\mathbf{u})=k(\left\Vert \mathbf{u}\right\Vert )=\frac{(\sqrt{2\beta}\left\Vert \mathbf{u}\right\Vert ^{\frac{\alpha}{2}})^{\beta}}{\Gamma(\beta)2^{\beta-1}}\mathcal{K}_{\beta}(\sqrt{2\beta}\left\Vert \mathbf{u}\right\Vert ^{\frac{\alpha}{2}})\ ,\ \mathbf{u}\in\mathbb{R}^{d}\label{eq:generalized_matern_kernel}
\end{equation}
where $\alpha>0$, $\beta>0$, is given for $\mathbf{x}\in\mathbb{R}^{d}$ by
\begin{equation}
\mathcal{F}(\mathbf{x})=f(\left\Vert \mathbf{x}\right\Vert )=\frac{1}{\alpha2^{d-1}\pi^{\frac{d}{2}}\Gamma(\beta)(\beta/2)^{\frac{d}{\alpha}}}H_{2,2}^{1,2}\bigg(\frac{\left\Vert \mathbf{x}\right\Vert ^{2}}{4(\beta/2)^{\frac{2}{\alpha}}}\bigg|\begin{array}{l}
(1-\frac{d}{\alpha}-\beta,\frac{2}{\alpha}),(1-\frac{d}{\alpha},\frac{2}{\alpha})\\
(0,1),(1-\frac{d}{2},1)
\end{array}\bigg).\label{eq:fourier_generalized_matern}
\end{equation}
\end{prop}
\begin{proof}
Use equation~\eqref{eq:fourier_meijer}, the formula $\frac{(\sqrt{2\beta s})^{\beta}}{\Gamma(\beta)2^{\beta-1}}\mathcal{K}_{\beta}(\sqrt{2\beta s})=\frac{1}{\Gamma(\beta)}G_{0,2}^{2,0}\Big(\frac{\beta}{2}s\Big|\beta,0\Big)$,
\citep[equation~8.3.1-21]{prudnikov1990integrals} and \citep[equation~2.25.1]{prudnikov1990integrals}.
From \citep[Theorem~1.1 case~7]{mathai2010hfunction}, this $H$-function
exists and is analytical for all $\alpha>0$ and $\mathbf{x}\neq\mathbf{0}$, and the case $\mathbf{x}=\mathbf{0}$ can be verified separately.
\end{proof}
\begin{rem}
In the case $\alpha=2$, one retrieves, as expected, a multivariate
$t$-distribution 
\begin{align}
f(\left\Vert \mathbf{x}\right\Vert ) & =\frac{1}{2^{d}\pi^{\frac{d}{2}}\Gamma(\beta)(\beta/2)^{\frac{d}{2}}}H_{2,2}^{1,2}\bigg(\frac{\left\Vert \mathbf{x}\right\Vert ^{2}}{2\beta}\bigg|\begin{array}{l}
(1-\frac{d}{2}-\beta,1),(1-\frac{d}{2},1)\\
(0,1),(1-\frac{d}{2},1)
\end{array}\bigg)\nonumber \\
 & =\frac{1}{\pi^{\frac{d}{2}}\Gamma(\beta)(2\beta)^{\frac{d}{2}}}G_{2,2}^{1,2}\bigg(\frac{\left\Vert \mathbf{x}\right\Vert ^{2}}{2\beta}\bigg|\begin{array}{l}
1-\frac{d}{2}-\beta,1-\frac{d}{2}\\
0,1-\frac{d}{2}
\end{array}\bigg)\nonumber \\
 & =\frac{\Gamma(\frac{d}{2}+\beta)\Gamma(\frac{d}{2})}{\pi^{\frac{d}{2}}\Gamma(\beta)(2\beta)^{\frac{d}{2}}}{}_{2}\tilde{F}_{1}\bigg(\frac{d}{2}+\beta,\frac{d}{2};\frac{d}{2};-\frac{\left\Vert \mathbf{x}\right\Vert ^{2}}{2\beta}\bigg)\nonumber \\
 & =\frac{\Gamma(\frac{d}{2}+\beta)}{\pi^{\frac{d}{2}}\Gamma(\beta)(2\beta)^{\frac{d}{2}}}\bigg(1+\frac{\left\Vert \mathbf{x}\right\Vert ^{2}}{2\beta}\bigg)^{-\frac{d}{2}-\beta}\label{eq:multivariate_t}
\end{align}
where we used the special function formula $_{2}\tilde{F}_{1}(a,b;b;z)=(1-z)^{-a}/\Gamma(b)$.
Equation \eqref{eq:multivariate_t} is exactly the density of a multivariate
$t$-distribution with $\nu=2\beta$ degrees of freedom \citep[equation~(1.1)]{kotz2004multivariate}.
\end{rem}

\subsection{Tricomi kernel\label{subsec:tricomi_kernel}}
\begin{prop}
\label{prop:fourier_tricomi}The Fourier transform of the Tricomi
kernel 
\begin{equation}
K(\mathbf{u})=k(\left\Vert \mathbf{u}\right\Vert )=\frac{\Gamma\left(\beta+\gamma\right)}{\Gamma\left(\gamma\right)}\,\mathcal{U}\!\left(\beta,1-\gamma,\lambda\left\Vert \mathbf{u}\right\Vert ^{\alpha}\right)\ ,\ \mathbf{u}\in\mathbb{R}^{d}\label{eq:tricomi_kernel}
\end{equation}
where $\alpha>0$, $\beta>0$, $\lambda>0$, is given for $\mathbf{x}\in\mathbb{R}^{d}$ by
\begin{equation}
\mathcal{F}(\mathbf{x})=f(\left\Vert \mathbf{x}\right\Vert )=\frac{1}{\alpha2^{d-1}\pi^{\frac{d}{2}}\Gamma(\beta)\Gamma(\gamma)\lambda^{\frac{d}{\alpha}}}H_{2,3}^{2,2}\bigg(\frac{\left\Vert \mathbf{x}\right\Vert ^{2}}{4\lambda^{\frac{2}{\alpha}}}\bigg|\begin{array}{l}
(1-\frac{d}{\alpha},\frac{2}{\alpha}),(1-\frac{d}{\alpha}-\gamma,\frac{2}{\alpha})\\
(0,1),(\beta-\frac{d}{\alpha},\frac{2}{\alpha}),(1-\frac{d}{2},1)
\end{array}\bigg).\label{eq:fourier_tricomi}
\end{equation}
\end{prop}
\begin{proof}
Use equation~\eqref{eq:fourier_meijer}, the formula $\frac{\Gamma\left(\beta+\gamma\right)}{\Gamma\left(\gamma\right)}\,\mathcal{U}\!\left(\beta,1-\gamma,\lambda s\right)=\frac{1}{\Gamma(\beta)\Gamma(\gamma)}G_{1,2}^{2,1}\Big(\lambda s\Big|\begin{array}{l}
{\scriptstyle 1-\beta}\\
{\scriptstyle 0,\gamma}
\end{array}\Big)$, \citep[equation~8.3.1-21]{prudnikov1990integrals} and \citep[equation~2.25.1]{prudnikov1990integrals}.
From \citep[Theorem~1.1 case~7]{mathai2010hfunction}, this $H$-function
exists and is analytical for all $\alpha>0$ and $\mathbf{x}\neq\mathbf{0}$, and the case $\mathbf{x}=\mathbf{0}$ can be verified separately.
\end{proof}
Proposition~\ref{prop:fourier_tricomi} can be used to obtain an
analytical formula for the spectral density of the confluent hypergeometric
kernel \citep{ma2023beyond}.
\begin{cor}\label{cor:confluent_hypergeometric_spectral_density}
The spectral density of the confluent hypergeometric kernel, defined
by $K(\mathbf{u})=\frac{\Gamma\left(\beta+\gamma\right)}{\Gamma\left(\gamma\right)}\,\mathcal{U}\!\left(\beta,1-\gamma,\gamma\left\Vert \mathbf{u}\right\Vert ^{2}\right)$,
is given explicitly by
\begin{align}
f(\left\Vert \mathbf{x}\right\Vert ) & =\frac{1}{2^{d}\pi^{\frac{d}{2}}\Gamma(\beta)\Gamma(\gamma)\gamma^{\frac{d}{2}}}G_{2,3}^{2,2}\bigg(\frac{\left\Vert \mathbf{x}\right\Vert ^{2}}{4\gamma}\bigg|\begin{array}{l}
1-\frac{d}{2},1-\frac{d}{2}-\gamma\\
0,\beta-\frac{d}{2},1-\frac{d}{2}
\end{array}\bigg)\ ,\ \mathbf{x}\in\mathbb{R}^{d}.\label{eq:fourier_confluent}
\end{align}
\end{cor}
\begin{proof}
Set $\alpha=2$ and $\lambda=\gamma$ in equation~\eqref{eq:fourier_tricomi},
and simplify the result.
\end{proof}

\end{document}